\def\R{\mathbb{R}}
\def\cov{\mathrm{cov}}
\def\diag{\mathrm{diag}}
\def\nor{\mathcal{N}}
\def\Cov{\textnormal{Cov}}
\def\1{\mathds{1}}
\def\X{\mathbf{X}}
\def\Y{\mathbf{Y}}
\def\SD{\mathcal{SD}}
\def\DE{\mathcal{DE}}
\def\mWeICA{\text{MWeICA}}
\newtheorem{observation}{Observation}[section]
\begin{document}

\title{Independent Component Analysis based on multiple data-weighting}

\author{Andrzej Bedychaj         \and
        Przemysław Spurek \and
        Łukasz Struski  \and
        Jacek Tabor
}

\institute{Andrzej Bedychaj \email{andrzej.bedychaj@gmail.com}           \and
           Przemysław Spurek \email{przemyslaw.spurek@gmail.com} \and
           Łukasz Struski \email{lukaszstruski@gmail.com}\and
           Jacek Tabor \email{jcktbr@gmail.com}
}

\date{Received: date / Accepted: date}

\maketitle

\begin{abstract}

Independent Component Analysis (ICA) - one of the basic tools in data analysis - aims to find a coordinate system in which the components of the data are independent. 
In this paper we present \textbf{M}ultiple-\textbf{we}ighted \textbf{I}ndependet \textbf{C}omponent \textbf{A}nalysis (\mWeICA{}) algorithm, a new ICA method which is based on approximate diagonalization of weighted covariance matrices. Our idea is based on theoretical result, which says that linear independence of weighted data (for gaussian weights) guarantees independence.
Experiments show that \mWeICA{} achieves better results to most state-of-the-art ICA methods, with similar computational time. \end{abstract}

\section{Introduction}

%
%

Independent Component Analysis (ICA), called also Blind Source Separation (BSS),  is a method for decomposing mixture of signals into a set of independent components. ICA is similar in many aspects to principal component analysis (PCA). In PCA we look for an orthonormal change of basis so that the components are not linearly dependent (uncorrelated). ICA can be described as a search for the optimal basis (coordinate system) in which the components are independent.
Although both problems are closely related, PCA has a closed-form solution given by simple matrix operations, while most of existing solutions of ICA use iterative optimization procedure. 

In signal processing ICA is a computational method for separating a multivariate signal into additive subcomponents and has been applied in magnetic resonance \citep{beckmann2004probabilistic}, MRI \citep{beckmann2005tensorial,rodriguez2012noising}, EEG analysis \citep{brunner2007spatial,delorme2007enhanced},
fault detection \citep{choi2005fault}, financial time series \citep{kiviluoto1998independent} and  seismic recordings \citep{haghighi2008ica}.
Moreover, it is hard to overestimate the role of ICA in pattern recognition and image analysis; its applications include face recognition \citep{yang2005kernel,dagher2006face}, texture segmentation \citep{jenssen2003independent}, object recognition~\citep{bressan2003using}, multi-label learning \citep{xu2016local} and feature extraction \citep{lai2014multilinear}.

\begin{figure}[!h]
\begin{center}
\subfigure[Original images 42049 and 220075.] {\label{fig:image_ICA_int_1}
\includegraphics[width=.23\linewidth]{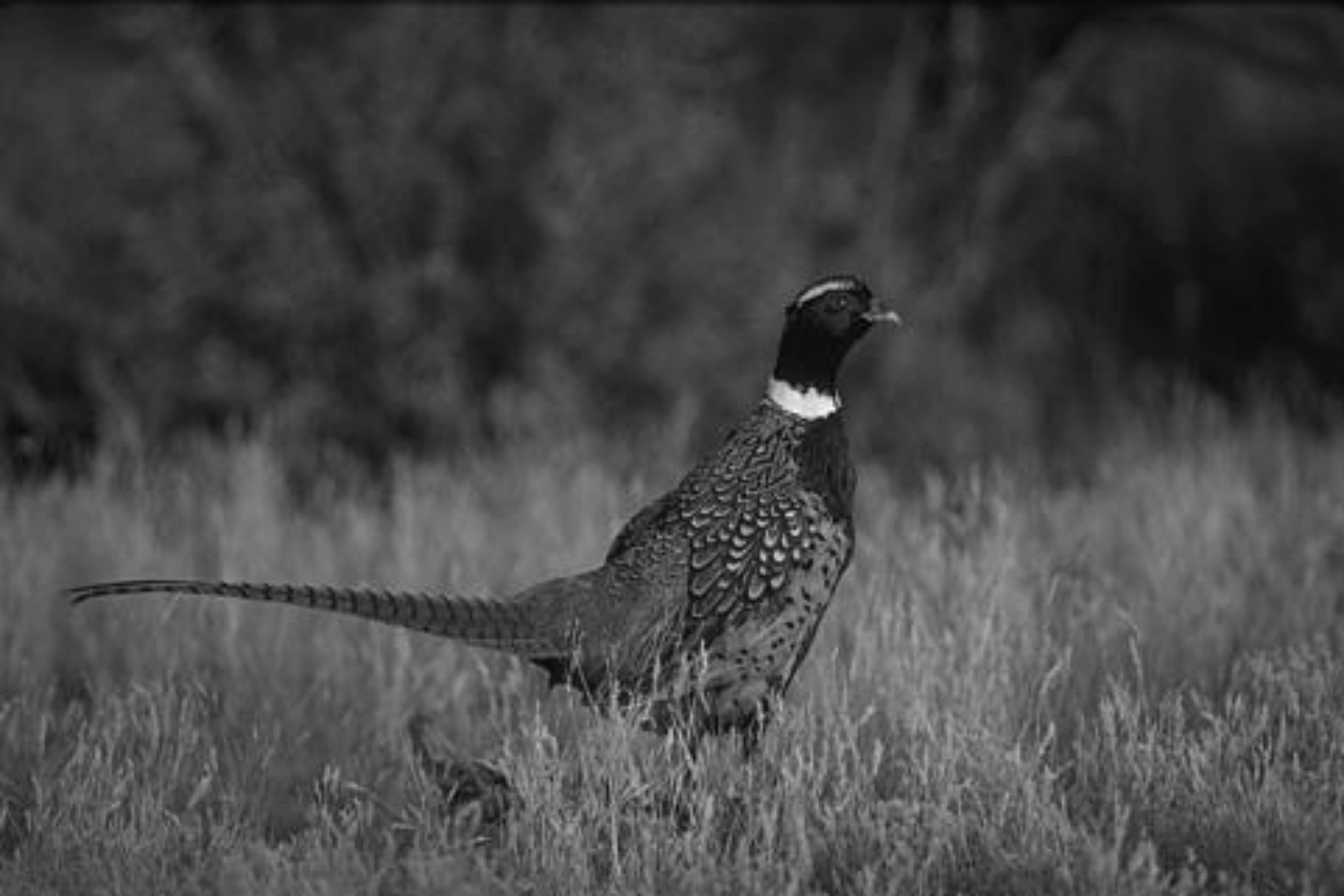}
\includegraphics[width=.23\linewidth]{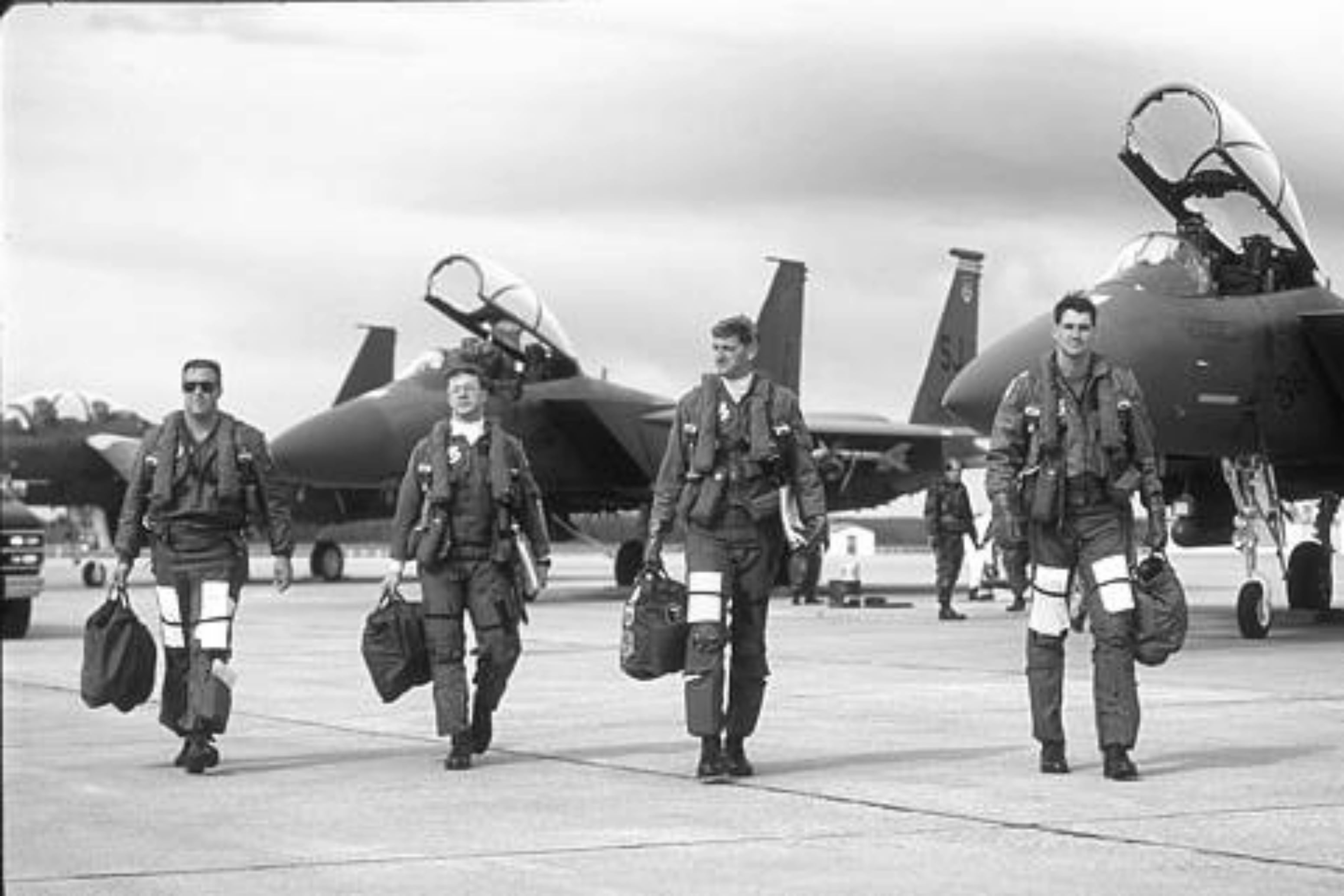}
} 
\subfigure[Mix of images done via random linear projection.] {\label{fig:image_ICA_int_2}
\includegraphics[width=.23\linewidth]{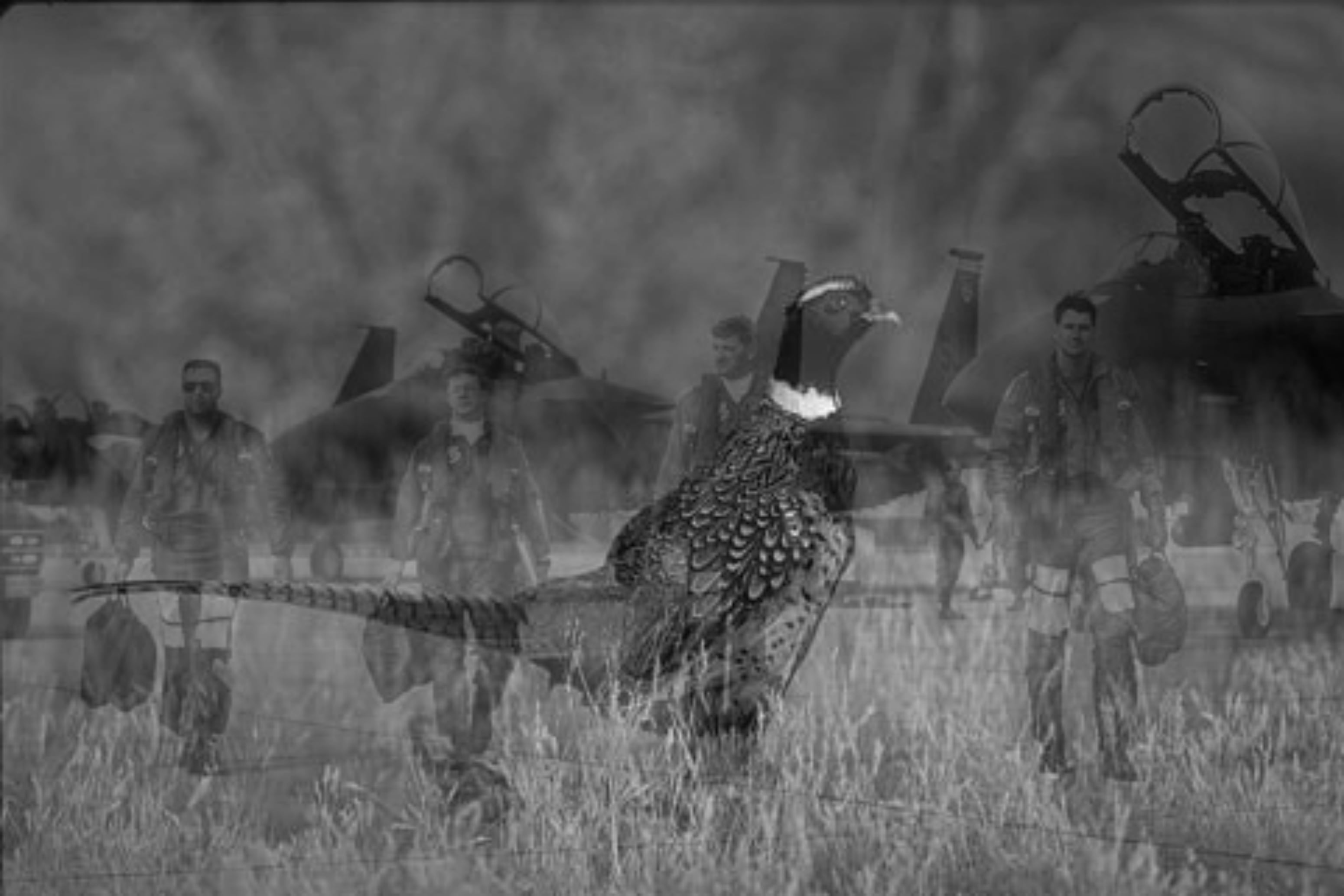} 
\includegraphics[width=.23\linewidth]{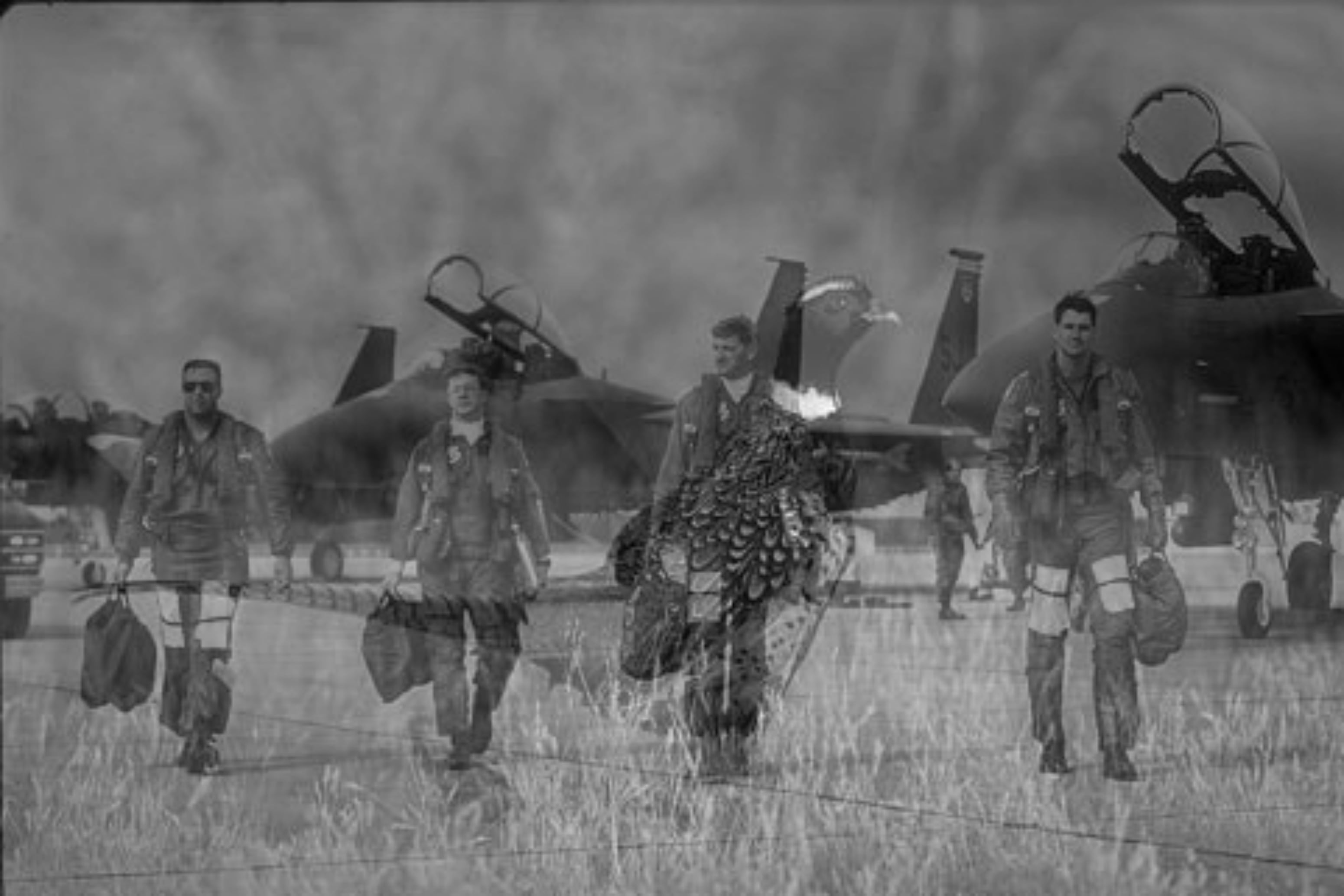}
} 
\subfigure[\mWeICA.] {\label{fig:image_ICA_int_3}
\includegraphics[width=.23\linewidth]{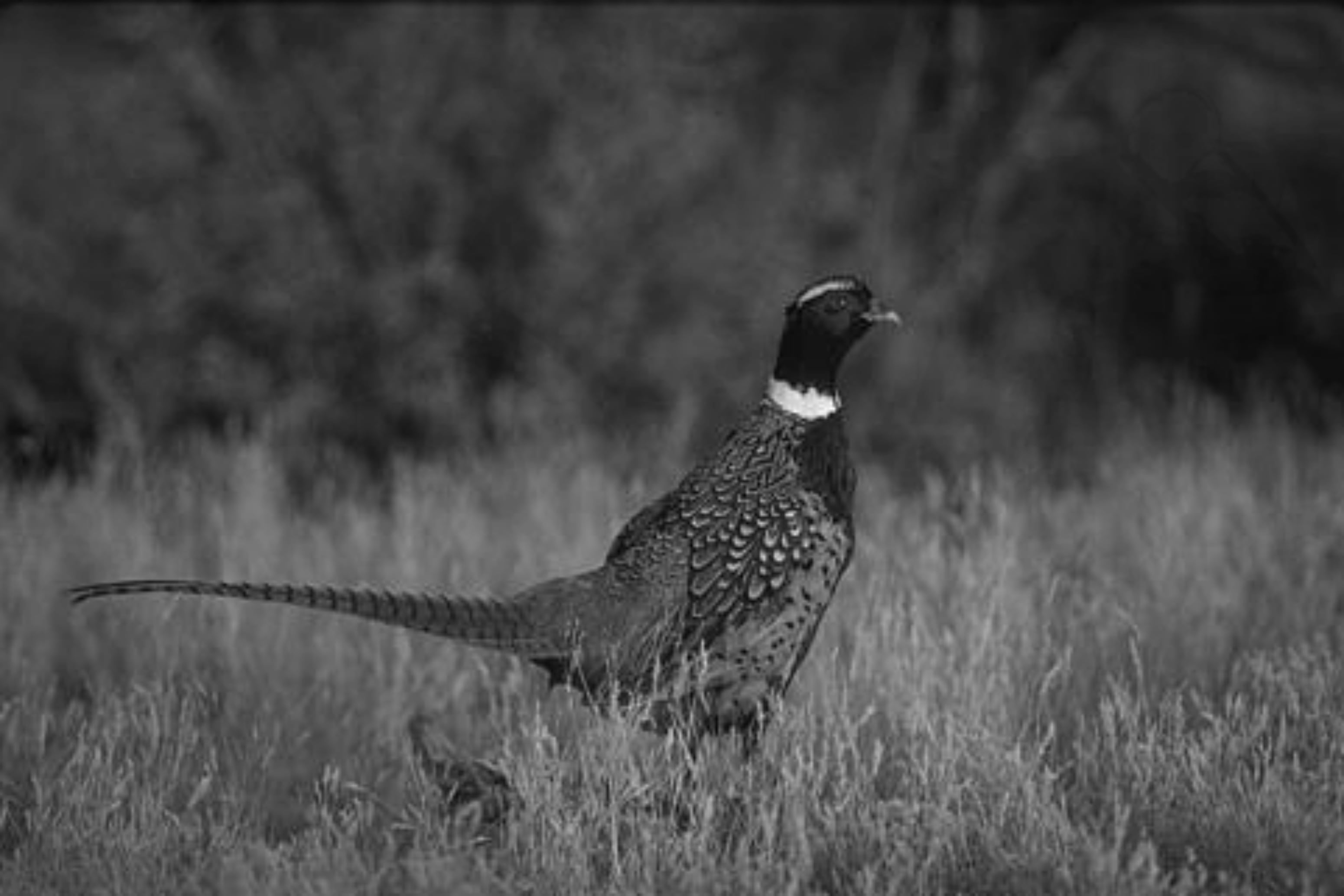} 
\includegraphics[width=.23\linewidth]{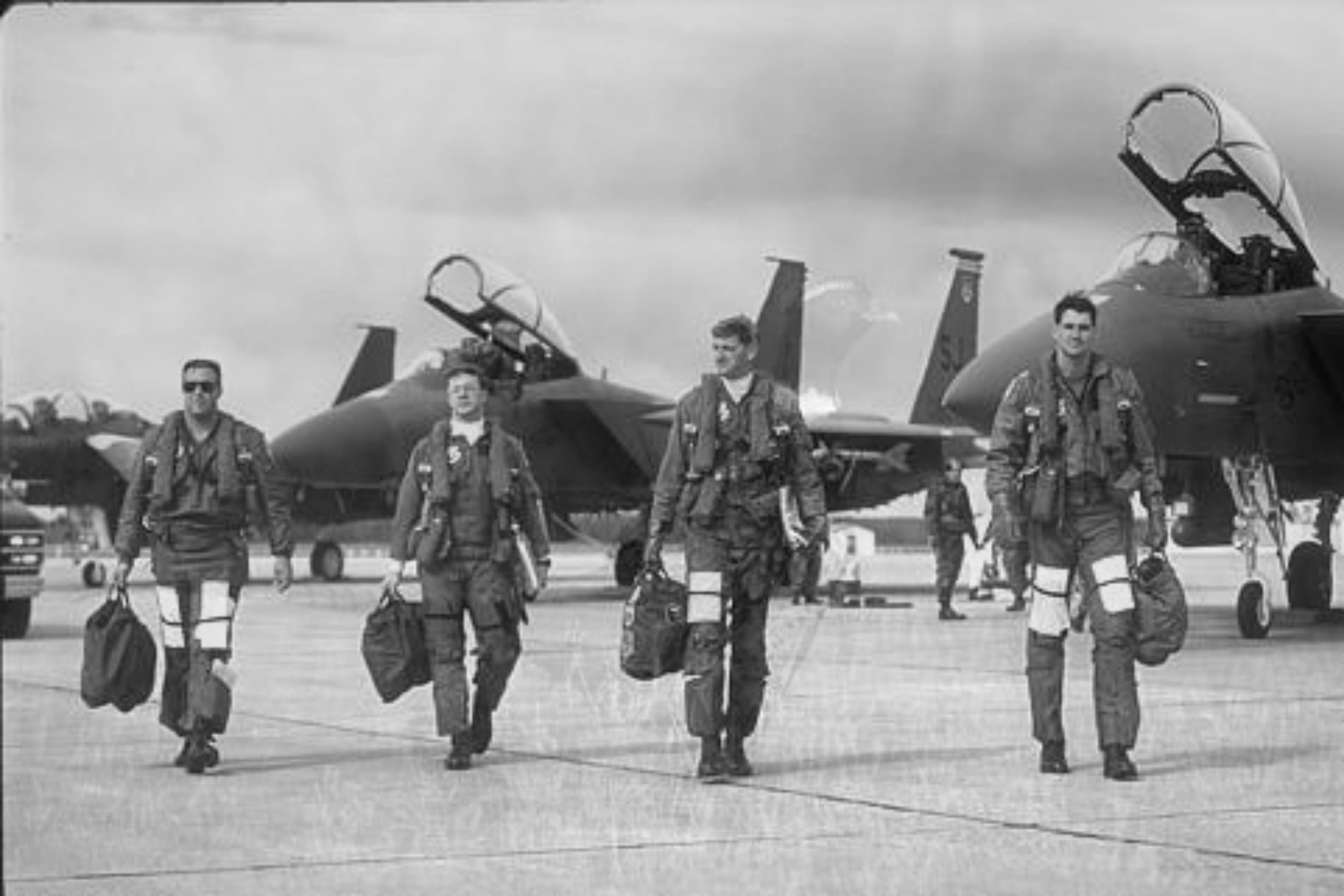} 
} 
\subfigure[FastICA.] {\label{fig:image_ICA_int_4}
\includegraphics[width=.23\linewidth]{picture1/130-3mWeICA1} 
\includegraphics[width=.23\linewidth]{picture1/130-3mWeICA0} 
}
\end{center}
\caption{Comparison of images separation by our method (\mWeICA) with FastICA.}
\label{fig:image_ICA_int}
\end{figure}

Let us now briefly describe the most common approaches used in solving ICA problem.
Lacoume and Ruiz in \citep{lacoume1992separation} where one of the first to  use higher-order statistics in case of blind source separation. Algorithm that separates observed mixed signals into latent source signals by exploiting fourth order moment was introduced in \citep{cardoso1999high}. Cardoso applied fourth-order cumulants (aforementioned kurtosis), as a measure for fitting independent components (this method is called JADE). The main drawback of those approaches is that kurtosis is very sensitive to the outliers, which makes some difficulty in its estimation from small samples \citep{yang2005ica}. Applying lower-order moments for ICA is not exploited that much in literature. Independent component analysis using score functions from the Pearson system is one of the most renowned method exploring that subject (PearsonICA \citep{karvanen2002blind,Juh-2002}). The algorithm is designed especially for problems with asymmetric sources. 
Split Gaussian ICA (SgICA) \citep{spurek2017ica} is based on the maximum likelihood estimation. In such a case we search for the coordinate system optimally fitted to data
as well as the marginal densities such that the data density factors in the base are the product of marginal densities. Authors model skewness using the Split Gaussian distribution, which is well adapted to asymmetric data.

Another important approach to identifying independent components is related to mutual information measure, that is also a measure of independence of base signals \citep{bell1995information,ICA:1}. One of the fastest realization of such approach is FastICA \citep{Hyvarinen-1999}. Algorithm revolves around extracting prewhiten components one by one, using nonlinear function (proposed in \citep{Hyvarinen2001}) in fixed-point iterative approach. ProDenICA \citep{bach2002kernel,hastie2009elements} expands single nonlinear function to the entire function space of candidate nonlinearities making it more robust to varying source distributions, but also more time consuming.

An approach based on (approximate) diagonalization of matrices 
to ICA (which we also apply in different context) was proposed in \citep{Eidinger-2004}. Authors created an algorithm named CHESS (CHaracteristic function Enabled Source Separation). Solution proposed in aforementioned paper achieves separation by applying joint diagonalization to a set of estimated second derivative matrices (Hessians) of the second generalized characteristic function at selected processing points of mixed dataset. In \citep{spurek2018fast} authors present ICA method called WeICA (Weighted ICA), which is also based on simulatenous diagonalization of two matrices, and consequently has a simple closed-form solution.  
WeICA uses weighted data to determine independent components. The approach proposed in \citep{spurek2018fast} outperforms other state-of-the-art ICA methods with respect to time complexity, gives very good results in the case of dimension reduction and can be used as a initialization for iterative approaches to ICA problem.
Unfortunately the method is unstable and gives slightly worse results in the case of source separation problem.


In this paper we want to propose a similar approach to WeICA, called Multiple Weighted ICA (\mWeICA), where the discriminating role is played by weighting of the data. \mWeICA{} is an easy to parallel algorithm for ICA task that takes advantage of approximate parallel diagonalization of weighted covariance matrices for base set $X$. As compared to WeICA, \mWeICA, while slower, gives better results in the case of source separation problem, see Fig. \ref{fig:image_ICA_int}. Moreover, in our main theoretical result, Theorem \ref{tw:odwrotne}, we show that 
the linear independence of normally weighted data guarantees independence. Consequently this allow us to construct a new measure of independence, which can be used similarly to dCov or dCor \citep{szekely2007measuring}.
Details will be covered in Section \ref{Theory} and full algorithm will be presented in Section \ref{sec:mwICA}.

\section{Weighted data}  \label{Theory}
%
%
%
%
%
%



Let $\X$ be a $d$-dimensional random vector with a probability density function $f$ and let $w:\R^d \to \R_+$ be a bounded weighting function. By $\X_w$ we denote a weighted random vector with a density
$$
f_w(x)= \frac{w(x)f(x)}{\int w(z)f(z)dz},
$$
which is just the normalization of $w(x) f(x)$. 

We recall that the random vector $\X$ with density $f$ in $\R^d$ has independent components iff $f$ factors as
$$
f(x_1,\ldots,x_d)=f_1(x_1) \cdot \ldots \cdot f_d(x_d),
$$
for a certain one dimensional $f_i$. Cleary, independence implies linear independence.
In general, except for multivariate gaussians, the opposite implication does not hold.

Let us begin with the observation that weighting by the normal density 
with covariance proportional to that of the random vector does not destroy the independence. By $\nor(m,\Sigma)$ we denote the normal density with mean at $m$ and covariance matrix $\Sigma$. Given a random vector $\X$, $m \in \R^d$ we put
$$ 
\X_{[m]}=\X_w \text{ with weight }w=\nor(m,\cov \X).
$$  

One can easily verify that for every affine map $Ax+b$, where $A$
is linear and $b \in \R^d$, we have
\begin{equation} \label{eq:lin}
A\X_{[m]}+b=(A\X+b)_{[Am+b]},
\end{equation}
\begin{equation} \label{eq:second}
\cov (A\X_{[m]}+b)=A \cov \X_{[m]} A^T.
\end{equation}

The above formula guarantee in particular that the ICA we are going to construct is invariant with respect to the affine transformations of the data.
As an important consequence of the fact that multivariate normal density factors as a product of univariate normal densities, we obtain the following observation.

\begin{observation} \label{pr:1}
Let $\X$ be a random vector in $\R^d$ with density $f$ which has indepenent components.
Let $m \in \R^d$ be arbitrary fixed. 
Then $\X_{[m]}$ has independent components.
\end{observation}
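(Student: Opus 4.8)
The plan is to work directly with densities. By definition $\X_{[m]}=\X_w$ for the weight $w=\nor(m,\cov\X)$, so its density is the normalization
$$
f_w(x)=\frac{w(x)f(x)}{\int w(z)f(z)\,dz}.
$$
To conclude that $\X_{[m]}$ has independent components I must exhibit a factorization $f_w(x)=g_1(x_1)\cdots g_d(x_d)$ into one--dimensional functions. The starting point is the hypothesis, which gives $f(x)=f_1(x_1)\cdots f_d(x_d)$; the task is therefore to see that multiplying by $w$ and renormalizing preserves this product form.

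The key observation I would highlight first is that the covariance matrix $\cov\X$ is \emph{diagonal}. Indeed, independence of the components implies they are pairwise uncorrelated, so all off--diagonal entries of $\cov\X$ vanish; write $\cov\X=\diag(\sigma_1^2,\dots,\sigma_d^2)$. This is exactly the fact that unlocks the argument, because a multivariate normal density with diagonal covariance factors as a product of univariate normal densities: $w(x)=\nor(m,\cov\X)(x)=\prod_{i=1}^d \nor(m_i,\sigma_i^2)(x_i)$. This is the step the text flags as the essential ingredient, and recognizing that the diagonality of $\cov\X$ is what makes it available is the only real point of the proof.

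With both $f$ and $w$ written as products, the numerator becomes a product of univariate factors,
$$
w(x)f(x)=\prod_{i=1}^d \bigl(\nor(m_i,\sigma_i^2)(x_i)\,f_i(x_i)\bigr).
$$
For the denominator I would invoke Tonelli's theorem on the nonnegative integrand to split the integral across coordinates, $\int w(z)f(z)\,dz=\prod_{i=1}^d \int \nor(m_i,\sigma_i^2)(t)\,f_i(t)\,dt$, each factor being a positive finite constant since $w$ is bounded and $f$ is a density. Dividing then yields $f_w(x)=\prod_{i=1}^d g_i(x_i)$ with
$$
g_i(x_i)=\frac{\nor(m_i,\sigma_i^2)(x_i)\,f_i(x_i)}{\int \nor(m_i,\sigma_i^2)(t)\,f_i(t)\,dt},
$$
which is precisely the statement that $\X_{[m]}$ has independent components.

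There is no genuine obstacle here beyond the conceptual one already noted: the whole argument hinges on the passage from independence of the components of $\X$ to diagonality of $\cov\X$, which in turn forces the Gaussian weight to factorize coordinatewise. Once that is in place the remaining steps are the routine factoring of a product and a Fubini/Tonelli splitting of the normalizing integral, so the care required is only in checking that each univariate factor is finite and nonzero so that the $g_i$ are well defined.
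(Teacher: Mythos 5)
Your proof is correct and follows essentially the same route as the paper: independence of components forces $\cov\X$ to be diagonal, hence the Gaussian weight factors coordinatewise, and the weighted density is the normalization of a product of univariate factors. The only addition is your explicit Tonelli argument for splitting the normalizing constant, which the paper leaves implicit.
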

\begin{proof}
By the assumptions
\begin{equation} \label{eq:1}
f(x)=f(x_1,\ldots,x_d)=f_1(x_1) \cdot \ldots \cdot f_d(x_d),
\end{equation}
for certain densities $f_1,\ldots,f_d$. Since $f$ has indepenedent components, it has linearly independent components, which means that the covariance $\cov \X$ is diagonal, and therefore
$$
\nor(m,\cov\X)(x)=\nor_1(x_1) \cdot \ldots \cdot \nor_d(x_d),
$$
for certain one-dimensional gaussians $\nor_1,\ldots,\nor_d$. 
Consequently, by the above decomposition, $\X_{[m]}$ comes from a density which is the normalization of the function
$$
\begin{array}{l}
x \to \nor(m,\cov\X)(x)\cdot  f(x)
=\nor_1(x_1) f_1(x_1) \cdot \ldots \cdot \nor_d(x_d) f_d(x_d),
\end{array}
$$
which trivially means that the density of $\X_{[m]}$ has independent components.
\end{proof}

\section{Construction of \mWeICA{}}\label{sec:mwICA}

Let us first state formally the ICA problem. Given a random variable $\X$ we aim to find (if possible) an {\em unmixing matrix}, i.e. an invertible matrix $W$ such that $W^T \X$ has independent components. Now, directly from \eqref{eq:lin} and Observation~\ref{pr:1} we obtain the following proposition.

\begin{proposition} \label{pr:bas}
Let $\X$ be a random vector in $\R^d$ with density $f$ and let $W$
be an unmixing matrix for $\X$. Let $m \in \R^d$ be arbitrary fixed. Then $W$ is an unmixing matrix for $\X_{[m]}$, and consequently
$$
\cov (W^T\X_{[m]})=W^T \cov \X_{[m]} W \text{ is diagonal.}
$$
\end{proposition}

\begin{figure}
\centering
\includegraphics[height=6cm]{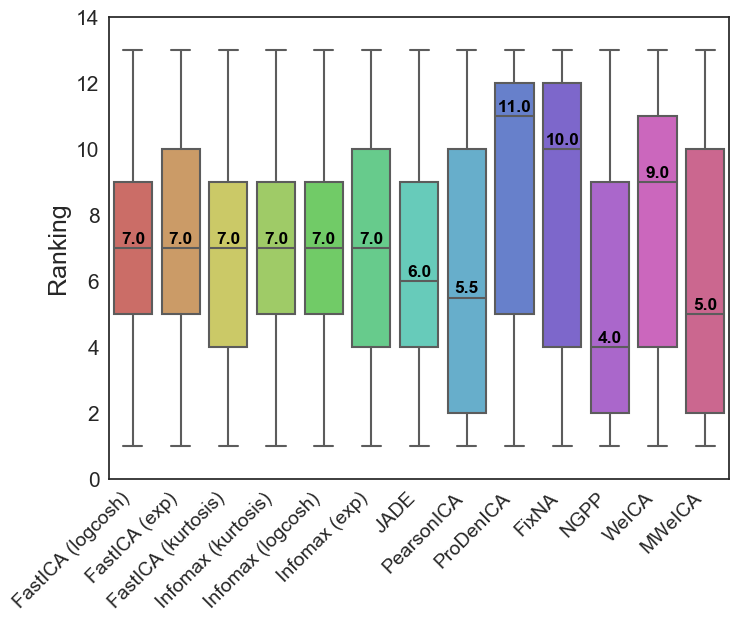}	
\caption{Comparison with popular implementations of ICA solutions (on set containing 1000 samples of $481\times 321$ pixel pictures). Plot presents boxplots of ranking (lower is better) places according to Tucker Congruency Coefficient \citep{Tucker} results.}
	\label{fig:imgaccuracy}
\end{figure}

The above proposition is the focal point of our idea. Before proceeding further, let us first recall
some basic results concerning simultaneous diagonalization of two matrices  \citep{fukunaga1990introduction,horn1985matrix}. We say that $W$ {\em diagonalizes} matrix $\Sigma$, if
$
W^T \Sigma W \text{ is diagonal.}
$
By $\SD(\Sigma_1,\ldots,\Sigma_n)$ we denote the set of matrices which simultaneously diagonalize all of the matrices:
$$
\SD(\Sigma_1,\ldots,\Sigma_n)=\{W:W \text{ diagonalizes }\Sigma_i
\text{ for all }i\}.
$$
It is well-known that for two positive symmetric matrices
the above set is nonempty, which is summarized in the following theorem, see \citep[Section 2.3]{fukunaga1990introduction}:



\begin{theorem}\label{tw:nonEmptySetOfDiagMatrix}
Let $\Sigma_1,\Sigma_2$ be symmetric positive matrices.
Then $\SD(\Sigma_1,\Sigma_2)$ is nonempty, and any its element $W$
 is given by eigenvector matrix of $\Sigma_1^{-1}\Sigma_2$.
 
Moreover, $W$ is determined uniquely (with respect to possible rescaling) if
$\Sigma_1^{-1}\Sigma_2$ has no multiple eigenvalues.
\end{theorem}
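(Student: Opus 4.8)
The plan is to reduce the simultaneous diagonalization of the two matrices to a single symmetric eigenproblem by ``whitening'' with respect to $\Sigma_1$. First I would use that $\Sigma_1$ is symmetric positive, hence admits a spectral decomposition $\Sigma_1 = Q \Lambda Q^T$ with $Q$ orthogonal and $\Lambda$ diagonal with strictly positive entries; this guarantees invertibility and lets me define the symmetric positive square root $\Sigma_1^{1/2} = Q \Lambda^{1/2} Q^T$ together with its inverse $\Sigma_1^{-1/2}$.

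Next I would form $M = \Sigma_1^{-1/2} \Sigma_2 \Sigma_1^{-1/2}$, which is again symmetric and positive because $\Sigma_2$ is, and diagonalize it orthogonally as $M = U D U^T$ with $U$ orthogonal and $D$ diagonal. Setting $W = \Sigma_1^{-1/2} U$, a direct computation gives $W^T \Sigma_1 W = U^T U = I$ and $W^T \Sigma_2 W = U^T M U = D$, so $W$ simultaneously diagonalizes both matrices and in particular $\SD(\Sigma_1,\Sigma_2) \neq \emptyset$.

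To identify this $W$ with an eigenvector matrix of $\Sigma_1^{-1}\Sigma_2$, I would rewrite the two congruence identities as $\Sigma_1 W = (W^T)^{-1}$ and $\Sigma_2 W = (W^T)^{-1} D$, and combine them to obtain $\Sigma_1^{-1}\Sigma_2\, W = W D$. Reading this equation column by column shows that each column of $W$ is an eigenvector of $\Sigma_1^{-1}\Sigma_2$, with eigenvalue given by the corresponding diagonal entry of $D$, which is exactly the asserted characterization.

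For the uniqueness claim, I would observe that if $M$ (equivalently $\Sigma_1^{-1}\Sigma_2$) has no repeated eigenvalue, then the orthogonal factor $U$ is determined up to permutation and sign of its columns, so $W = \Sigma_1^{-1/2}U$ is determined up to permutation and scaling of columns; and since rescaling a column of $W$ by a constant merely rescales the corresponding diagonal entries of $W^T\Sigma_i W$ while preserving diagonality, this is precisely the rescaling freedom stated in the theorem. The main obstacle is keeping the bookkeeping of the two congruences $W^T\Sigma_i W$ aligned with the eigenproblem for the \emph{nonsymmetric} product $\Sigma_1^{-1}\Sigma_2$, and, in the uniqueness part, verifying that a repeated eigenvalue genuinely introduces extra freedom (arbitrary orthogonal rotations within the corresponding eigenspace of $M$) that goes beyond mere rescaling.
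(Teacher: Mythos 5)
Your whitening argument is correct and is essentially the standard textbook proof; note, though, that the paper itself offers no proof of this statement at all --- it is quoted as a known fact with a pointer to Section 2.3 of Fukunaga's \emph{Introduction to Statistical Pattern Recognition}, where the same reduction (congruence by $\Sigma_1^{-1/2}$, orthogonal diagonalization of $\Sigma_1^{-1/2}\Sigma_2\Sigma_1^{-1/2}$, and the identification $\Sigma_1^{-1}\Sigma_2 W = WD$) is carried out. One small point to tighten: the theorem asserts that \emph{any} element of $\SD(\Sigma_1,\Sigma_2)$ is an eigenvector matrix of $\Sigma_1^{-1}\Sigma_2$, whereas your congruence manipulation is phrased only for the particular $W=\Sigma_1^{-1/2}U$ you construct, for which $W^T\Sigma_1W=I$. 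The same algebra covers the general case: for an arbitrary invertible $W\in\SD(\Sigma_1,\Sigma_2)$ you have $W^T\Sigma_1W=D_1$ and $W^T\Sigma_2W=D_2$ with $D_1$ invertible (by positivity of $\Sigma_1$), hence $\Sigma_1^{-1}(W^T)^{-1}=WD_1^{-1}$ and $\Sigma_1^{-1}\Sigma_2W=WD_1^{-1}D_2$, so the columns of $W$ are again eigenvectors. With that one-line extension, and your correct observation that a repeated eigenvalue introduces genuine rotational freedom within the corresponding eigenspace (so uniqueness up to rescaling really does require distinct eigenvalues), the proof is complete.
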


Applying the above theorem to Proposition \ref{pr:bas}, we directly obtain the following Corollary (a similar reasoning was applied in \citep{spurek2018fast} to construct WeICA):

\begin{corollary} \label{tw:weica}
Let $\X$ be a random vector and let $W$ be a matrix such that $W^T\X$
has independent components. Let $m_1,m_2 \in \R^d$ be given. Then 
\begin{equation} \label{eq:det}
W \in \SD(\cov \X_{[m_1]}, \cov \X_{[m_2]}).
\end{equation}

Moreover, if
\begin{equation} \label{eq:gen}
\cov^{-1} \X_{[m_1]} \cdot \cov \X_{[m_2]}
\text{ has distinct eigenvalues,}
\end{equation}
then $W$ is determined uniquely (up to possible rescaling), and consequently an arbitrary element of $\SD(\cov \X_{[m_1]}, \cov \X_{[m_2]})$ is an unmixing matrix for $\X$.
\end{corollary}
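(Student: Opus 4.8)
The plan is to chain Proposition~\ref{pr:bas} with Theorem~\ref{tw:nonEmptySetOfDiagMatrix}, handling the two displayed claims separately. The first is a pure substitution, while the second requires reconciling the ambiguity inherent in simultaneous diagonalization with the intrinsic ambiguity of the ICA problem.

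For \eqref{eq:det}, I would simply invoke Proposition~\ref{pr:bas} twice. Since $W$ is an unmixing matrix for $\X$, applying the proposition with $m=m_1$ shows that $W^T \cov \X_{[m_1]} W$ is diagonal, and applying it with $m=m_2$ shows that $W^T \cov \X_{[m_2]} W$ is diagonal. By the definition of diagonalization this says exactly that $W$ diagonalizes each of the two weighted covariances, i.e. $W \in \SD(\cov \X_{[m_1]}, \cov \X_{[m_2]})$.

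For the uniqueness statement, the first step is to note that the weighted covariances $\cov \X_{[m_1]}$ and $\cov \X_{[m_2]}$ are symmetric positive matrices, so Theorem~\ref{tw:nonEmptySetOfDiagMatrix} applies. That theorem then tells me that $\SD(\cov \X_{[m_1]}, \cov \X_{[m_2]})$ is nonempty and that, under the distinct-eigenvalue hypothesis \eqref{eq:gen}, its elements coincide up to rescaling with the eigenvector matrix of $\cov^{-1} \X_{[m_1]} \cdot \cov \X_{[m_2]}$; in particular the true $W$ from \eqref{eq:det} is this essentially unique matrix.

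It then remains to upgrade "unique up to rescaling" into the conclusion that \emph{every} element of the set is an unmixing matrix. Any other $W' \in \SD(\cov \X_{[m_1]}, \cov \X_{[m_2]})$ differs from $W$ only by right-multiplication by an invertible diagonal matrix $D$, so $(W')^T \X = D\,(W^T \X)$ is merely a coordinate-wise rescaling of the already-independent vector $W^T \X$, and scaling each coordinate by a nonzero constant preserves independence. I expect this last step to be the only genuinely delicate point: one must match the scaling (and, if the eigenvalue ordering is not pinned down, the column-permutation) freedom of the eigenvector matrix against the well-known scale-and-permutation ambiguity of ICA, and check that this freedom never leaves the class of unmixing matrices. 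The rest reduces to direct substitution into the two quoted results.
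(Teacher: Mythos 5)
Your proposal is correct and follows essentially the same route as the paper: the paper's (very terse) proof likewise applies Proposition~\ref{pr:bas} to conclude that $W$ simultaneously diagonalizes the two weighted covariances and then invokes Theorem~\ref{tw:nonEmptySetOfDiagMatrix} for the uniqueness claim. You merely fill in the details the paper leaves implicit, in particular the final observation that any other element of $\SD(\cov \X_{[m_1]}, \cov \X_{[m_2]})$ differs from $W$ by an invertible diagonal (and possibly permutation) factor, which preserves independence of the components.
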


\begin{proof}
By the previous observations we conclude that $W$ simultaneously diagonalizes matrices $\Cov \X$ and $\Cov \X_{[m]}$. From the thesis of Theorem \ref{tw:nonEmptySetOfDiagMatrix} we conclude the proof.
\end{proof}

One can observe that Theorem \ref{tw:weica} can be used to determine the unmixing matrix for ICA problem, however, there appears the question of the choice of $m$. 
Morever, we can only obtain the estimators of the covariance from the sample, and consequently to obtain a more stable version we propose to take a randomly picked sample $m_1,\ldots,m_n$ from $X$:

\begin{observation}\label{tw:mweica}
Let $\X$ be random vector which has the unmixing matrix $W$. Let $m_1,\ldots,m_n$ be randomly drawn points. 
Then 
\begin{equation} \label{eq:s}
W \in \mathcal{SD}\left(\Cov \X_{[m_1]}, \ldots, \Cov \X_{[m_n]}\right).
\end{equation}
\end{observation}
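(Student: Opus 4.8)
The plan is to derive the claim directly from Proposition~\ref{pr:bas}, applied separately to each of the points $m_1,\ldots,m_n$, together with the definition of the set $\SD$. No new machinery is needed; the statement is a pointwise reuse of what has already been established.

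First I would fix an arbitrary index $i \in \{1,\ldots,n\}$ and invoke Proposition~\ref{pr:bas} with $m = m_i$. Since $W$ is by assumption an unmixing matrix for $\X$, the proposition guarantees that $W$ is also an unmixing matrix for the weighted vector $\X_{[m_i]}$, and hence that
$$
\Cov(W^T\X_{[m_i]}) = W^T \Cov \X_{[m_i]} W \text{ is diagonal.}
$$
In the terminology introduced just before Theorem~\ref{tw:nonEmptySetOfDiagMatrix}, this says precisely that $W$ diagonalizes the matrix $\Cov \X_{[m_i]}$.

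Second, I would observe that the argument above does not depend on $i$: it holds verbatim for every $i = 1,\ldots,n$. Consequently $W$ simultaneously diagonalizes all of the matrices $\Cov \X_{[m_1]},\ldots,\Cov \X_{[m_n]}$, which is exactly the condition defining membership in $\SD(\Cov \X_{[m_1]},\ldots,\Cov \X_{[m_n]})$. This yields \eqref{eq:s} and completes the argument.

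Finally, I would remark that there is essentially no obstacle here: the membership \eqref{eq:s} is a deterministic fact that holds for an \emph{arbitrary} choice of the points $m_1,\ldots,m_n$, so the word ``randomly drawn'' in the hypothesis plays no role in establishing the containment itself. Its purpose is downstream, namely to make the joint-diagonalization problem well posed and numerically stable --- in particular to promote the distinctness of eigenvalues which, via Theorem~\ref{tw:nonEmptySetOfDiagMatrix} and Corollary~\ref{tw:weica}, guarantees uniqueness of the unmixing matrix up to rescaling. One could therefore state the observation with ``arbitrary'' in place of ``randomly drawn'' without affecting its proof.
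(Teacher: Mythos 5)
Your proposal is correct and follows exactly the route the paper intends: the observation is stated without an explicit proof precisely because it is the pointwise application of Proposition~\ref{pr:bas} to each $m_i$ followed by the definition of $\SD$, which is what you write out. Your closing remark that the randomness of the $m_i$ is irrelevant to the containment itself (and only matters downstream for uniqueness and stability) is also consistent with the paper's surrounding discussion.
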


We want to apply the above theorem in the case when we have only a sample from $\X$. Consequently if we place $X$ in place of $\X$ in \eqref{eq:s}, the weighted covariances will not be simultaneously diagonalizable. Thus to practically apply \eqref{eq:s} we need to use methods of approximate diagonalization, see \citep{Cardoso-1996, Pham-2001, Tichavsky-2009}. 
In our algorithm we apply \citep{Pham-2001} which  allows to calculate (approximately) unmixing matrix $W$ which minimizes the mean diagonalization error 
$$
\begin{array}{c}
\tfrac{1}{n}\sum_i \DE(W^T\Sigma_i W),
\end{array}
$$
where  the {\em diagonalization error $\DE(A)$} of a positive matrix $A$ is given by
$$
\begin{array}{c}
\DE(A)=\log \frac{\det \diag(A)}{\det A}.
\end{array}
$$
Clearly, $\DE(A) \geq 0$ and $\DE(A)=0$ iff $A$ is diagonal.

Thus the final \mWeICA{} can be stated as follows.

\paragraph{\mWeICA{} algorithm}
We are given a sample $X$ and a parameter $n$. We choose randomly $n$ elements $m_1,\ldots,m_n$. As an unmixing matrix for $X$ we take such an invertible matrix $W$ which minimizes\footnote{We find it with use of \citep{Pham-2001}.} the mean diagonalization error:
\begin{equation} \label{eq:q}
\DE(X;W;(m_i))=\frac{1}{n}\sum_i \DE(W^T \cov X_{[m_i]} W).
\end{equation}

Summarizing the reasoning from this section we see that 
\begin{itemize}
    \item  if $\X$ is a random vector such that $W^T\X$ has independent components for some invertible matrix $W$, then the value of RHS of \eqref{eq:q} is asymptotically\footnote{For the sample size of $X$ and $n$ going to infinity.} zero.
\end{itemize}
In the following section we prove our main theoretical result which show that also the opposite implication holds, i.e.
\begin{itemize}
    \item  if $\X$ is a random vector such that RHS of \eqref{eq:q} is asymptotically zero, then $W^T\X$ has independent components.
\end{itemize}

The above process can be expressed in following algorithm:
\begin{algorithm}[H]
\caption{\mWeICA}
\label{nWeICA}
Let $X=(x_j)_{j=1..k}$ be a dataset and let $n$ be given. To retrieve the unmixing matrix we proceed with the following steps:
\begin{enumerate}
\item compute $\Sigma=\Cov X$,
\item randomly pick $n$ points $m_i$ from $X$,
\item for each $i=1..n$ calculate weighted mean and covariance:
$$
\overline m_i=\tfrac{1}{\sum \limits_{j=1}^k w_{ij}} \sum_{j=1}^k w_{ij} x_j,
$$
$$
\Cov X_{[m_i]}=\tfrac{1}{\sum \limits_{j=1}^k w_{ij}} \sum_{j=1}^k w_{ij} (x_j-\overline m_i)(x_j-\overline m_i)^T,
$$
where $w_{ij}=\left(N(m_i,\Sigma)(x_j)\right)$ for $i=1..n,j=1,\ldots, k$,
\item retrieve by applying algorithm from \citep{Pham-2001} the best diagonalizing matrix $W$ for the set of $\{\Cov X_{[m_1]}, \dots, \Cov X_{[m_n]}\}$. 
   \end{enumerate}
Matrix $W$ is our unmixing matrix.
\end{algorithm}

Algorithm presented above is easy to parallelize. All operations from third point are independent from each other, which provides easy framework for concurrency. Diagonalization of covariance matrices is done via algorithm from \citep{Pham-2001} which was already implemented in Python \href{https://pythonhosted.org/pyriemann/index.html}{pyRiemann library}.

\begin{figure}[h]
	\centering
	\includegraphics[width=0.8\linewidth]{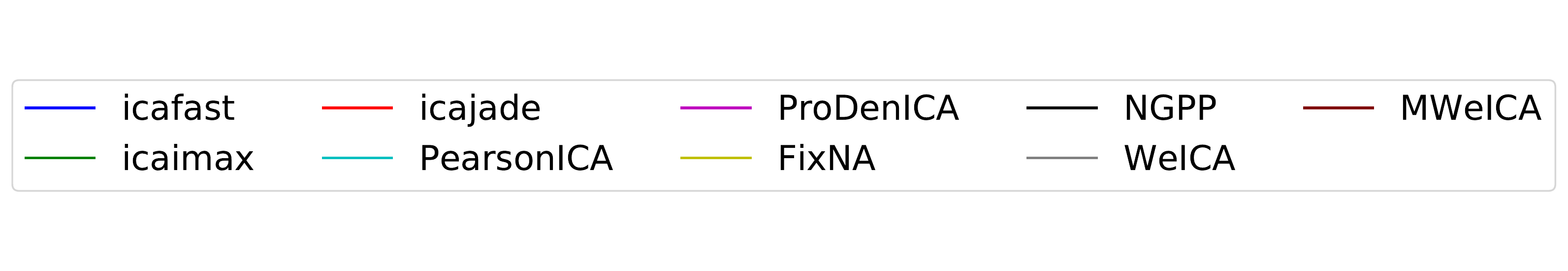} \\[-2em]
	\includegraphics[width=0.40\linewidth]{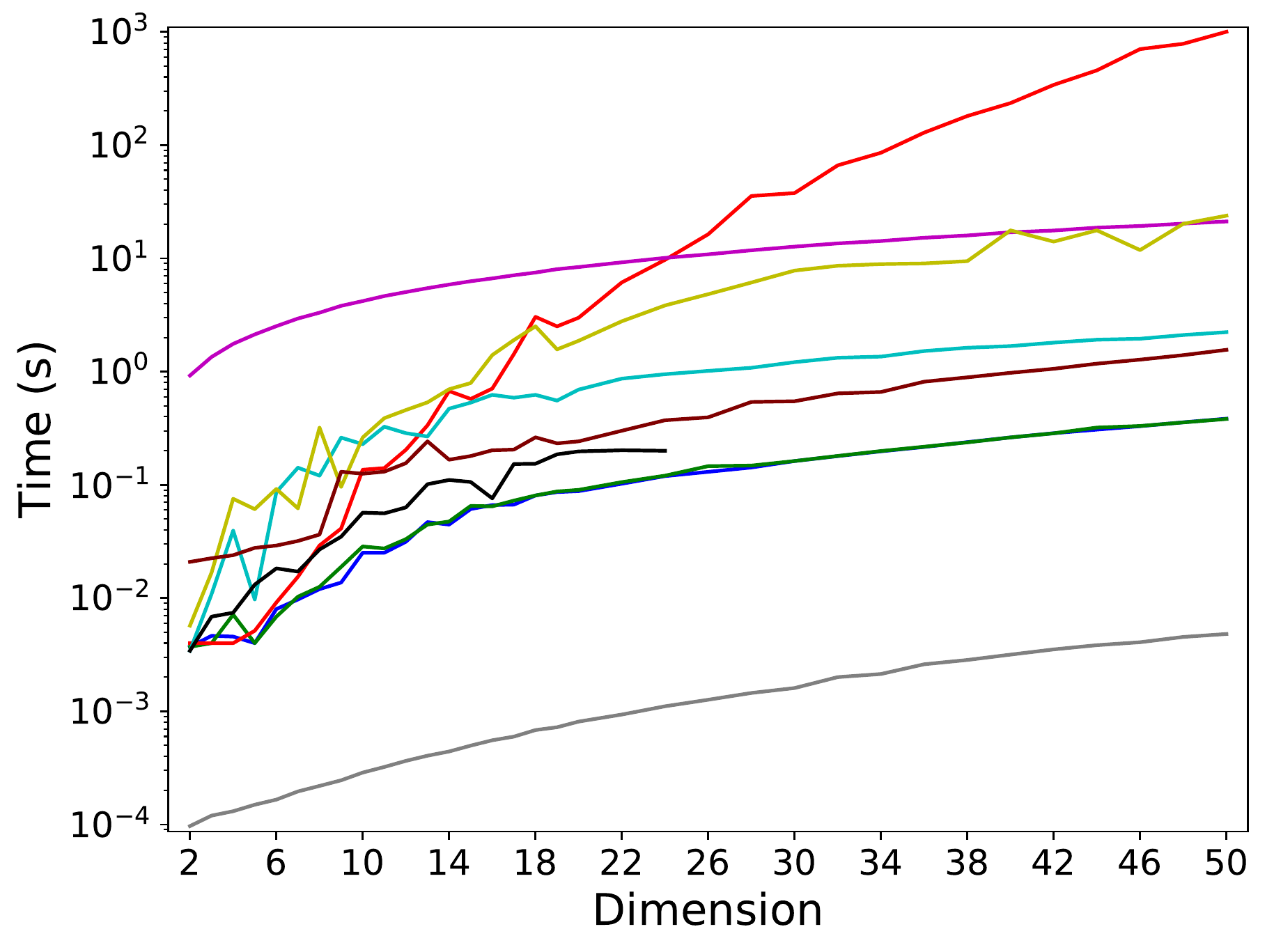}
	\includegraphics[width=0.40\linewidth]{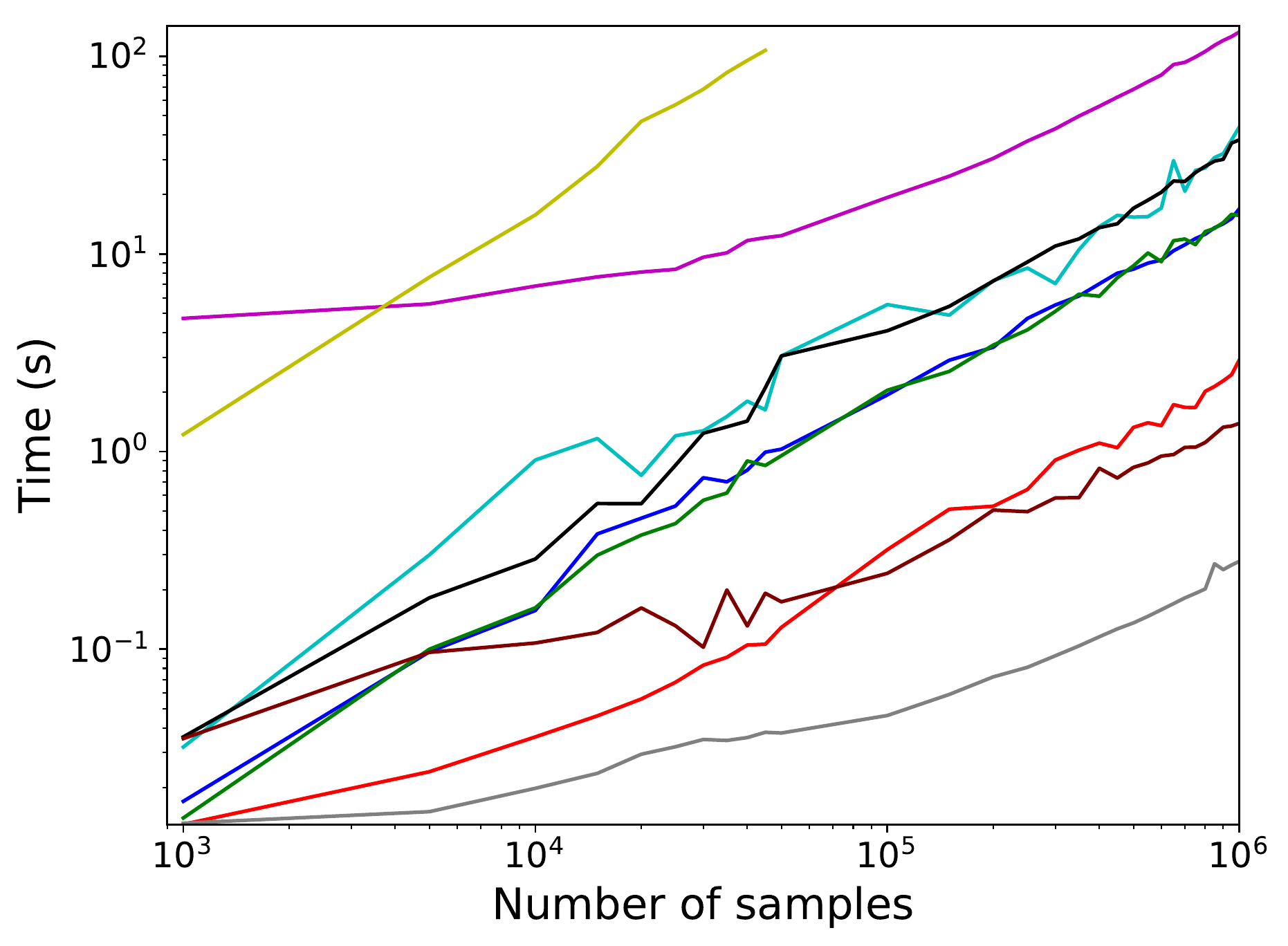}
	\caption{Comparison in time domain depending on mixed signal dimension (left hand side image) and on the number of data points (right hand side image)
	for ICA methods - only the fastest was tested. We can see that proposed method is highly insensitive to dimension and size of sample.} 
	\label{timeComparision}
\end{figure}

\section{Theory: independence index}

The foregoing observation gives an intuition that mathematical operations applied to unweighted data will not impact independence of further weighted data if the base set did not indicate any sign of that also. This allows us to work on unweighted data, and draw conclusions for later processed data based on those operations.

\begin{theorem} \label{th:lin}
We consider random vector $\Y$.
We assume that $\Y_{[m]}$ has linearly independent components for every $m \in B(\bar p,r) \subset \R^d$, for certain $r>0$. 

Then $\Y$ has independent components.
\end{theorem}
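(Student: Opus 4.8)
The plan is to read the hypothesis as a statement about a single analytic object. Since the paper identifies ``linearly independent components'' with ``diagonal covariance'' (as in the proof of Observation~\ref{pr:1}), the assumption is exactly that $\cov \Y_{[m]}$ is diagonal for every $m \in B(\bar p,r)$. I would recognise the whole family $\{\Y_{[m]}\}_m$ as an exponential family tilt of one fixed density, so that all these covariances become the Hessian of a single cumulant generating function, and then promote the \emph{local} diagonality to a \emph{global} one by analyticity.

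First I would rewrite the Gaussian weight as $\nor(m,\cov\Y)(x) \propto \exp(t^T x)\exp(-\tfrac12 x^T \Sigma^{-1}x)$, where $\Sigma=\cov\Y$ and $t=\Sigma^{-1}m$. Putting $h(x)=\exp(-\tfrac12 x^T\Sigma^{-1}x)f(x)$, the weighted density becomes $f_m(x)=\exp(t^T x)h(x)/M(t)$ with $M(t)=\int \exp(t^T x)h(x)\,dx$. Thus $\{\Y_{[m]}\}$ is the exponential family generated by $h$, and the classical identity for the log-partition function $K=\log M$ gives $\cov\Y_{[m]}=\nabla^2 K(t)$. Because the Gaussian factor dominates, the integrand is bounded by a constant multiple of $f$, so $M(t)<\infty$ for \emph{every} $t\in\R^d$ and $K$ is real-analytic on all of $\R^d$. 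The hypothesis says the off-diagonal entries $\partial_i\partial_j K$ ($i\neq j$) vanish on the open set $\Sigma^{-1}B(\bar p,r)$; by real-analyticity on the connected domain $\R^d$ they vanish identically. Hence $K(t)=\sum_i K_i(t_i)$, so $M$ factors, so $h$ factors as $\prod_i h_i(x_i)$; equivalently the base vector $\mathbf{Z}\propto h$ has independent components, and in fact every $\Y_{[m]}$ turns out to be fully independent, not merely uncorrelated.

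The remaining step, which I expect to be the main obstacle, is to remove the Gaussian tilt. From the factorization we have $f(x)=\exp(\tfrac12 x^T\Sigma^{-1}x)\prod_i h_i(x_i)$, so $\Y$ has independent components if and only if $\Sigma^{-1}$ (equivalently $\Sigma=\cov\Y$) is diagonal. The difficulty is precisely that the weight $\nor(m,\Sigma)$ is itself separable only when $\Sigma$ is diagonal, so one cannot simply transport independence from $\Y_{[m]}$ back to $\Y$. I would attack this through the self-consistency constraint $\Sigma=\cov\Y$ imposed on the factored form: the off-diagonal coupling $\exp((\Sigma^{-1})_{ij}x_ix_j)$ induces a correlation $\Sigma_{ij}$ carrying the same sign as $(\Sigma^{-1})_{ij}$, whereas the off-diagonal entries of a positive-definite matrix and of its inverse are forced to relate with opposite sign; the two requirements are compatible only if $(\Sigma^{-1})_{ij}=0$. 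This sign/consistency argument is clean in two dimensions (where the inverse off-diagonal is $-(\Sigma^{-1})_{ij}/\det$) and is, I expect, the place where the most care is needed to make the general-dimensional case rigorous; once $\Sigma$ is shown diagonal, the Gaussian factor separates and $f=\prod_i f_i$ follows at once.
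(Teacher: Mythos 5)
Your Steps 1--2 are a clean, dimension-free repackaging of the paper's STEP 1--2. Where the paper differentiates the identity $M_{00}M_{11}=M_{10}M_{01}$ in the tilt parameters to generate $M_{00}M_{ij}=M_{i0}M_{0j}$ by induction, you observe that all these quantities are derivatives of a single cumulant function $K=\log M$, that $K$ is real-analytic on all of $\R^d$ (the Gaussian factor bounds the integrand by $e^{\frac12 t^T\Sigma t}f$, so $M$ is an everywhere-finite Laplace transform), and that the vanishing of $\partial_i\partial_j K$ on an open set therefore propagates globally. Your MGF-uniqueness step is the rigorous form of the paper's ``densities with the same moments coincide''. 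Up to the factorization of the tilted density $h$, the two arguments are equivalent in substance and yours is tighter and works in any dimension.

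The divergence is your Step 3, and there the proposal has a genuine gap. The paper never confronts this step: its proof takes the weight to be $\phi_1(x_1)\phi_2(x_2)$ with $\phi_i=N(p_i,1)$, a \emph{product} of univariate Gaussians, so dividing $f_w=f_1 f_2$ by the weight factors $f$ immediately; this silently assumes $\cov\Y$ is diagonal, which is not in the hypothesis (the definition of $\Y_{[m]}$ uses $N(m,\cov\Y)$). You are right that for non-diagonal $\Sigma$ a residual argument is needed, and your self-consistency idea does close it for $d=2$: if $b=(\Sigma^{-1})_{12}>0$ then $g_1(x_1)g_2(x_2)e^{bx_1x_2}$ is MTP2, hence the coordinates are associated; associated plus uncorrelated implies independent (Hoeffding's covariance formula), which forces $b=0$, while $\Sigma_{12}=-b/\det\Sigma^{-1}$ carries the opposite sign --- contradiction. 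But for $d\geq 3$ both pillars of your sketch are false as stated: the sign of the direct coupling $(\Sigma^{-1})_{ij}$ does not control the sign of $\cov(Y_i,Y_j)$ once other couplings of mixed sign are present (the ``correlation follows interaction sign'' heuristic holds only for ferromagnetic/MTP2 sign patterns), and the off-diagonal entries of a positive-definite matrix and of its inverse are not sign-opposite in general (for a tridiagonal $\Sigma^{-1}$ one has $(\Sigma^{-1})_{13}=0$ while typically $\Sigma_{13}\neq 0$). So your plan, as written, proves the theorem only for $d=2$ --- which, to be fair, is all the paper actually proves, and unlike the paper you have made the missing reduction visible rather than assuming it away.
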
 

\begin{proof}
For clarity of the proof we consider only the case $D=2$ (one can easily adapt it to fit the general case). 

By $f$ we denote the density of random vector $\Y$. We use the following notation
$$
M_{ij}(\phi_1,\phi_2)=\iint  v_1^iv_2^j \phi_1(v_1) \phi_2(v_2) f(v_1,v_2)dv_1dv_2,
$$
which corresponds to the weighted moments of order $i,j$ and $f$ is a density of our independent data $\Y$.

\begin{figure}[h]
\centering
	\includegraphics[width=0.49\linewidth]{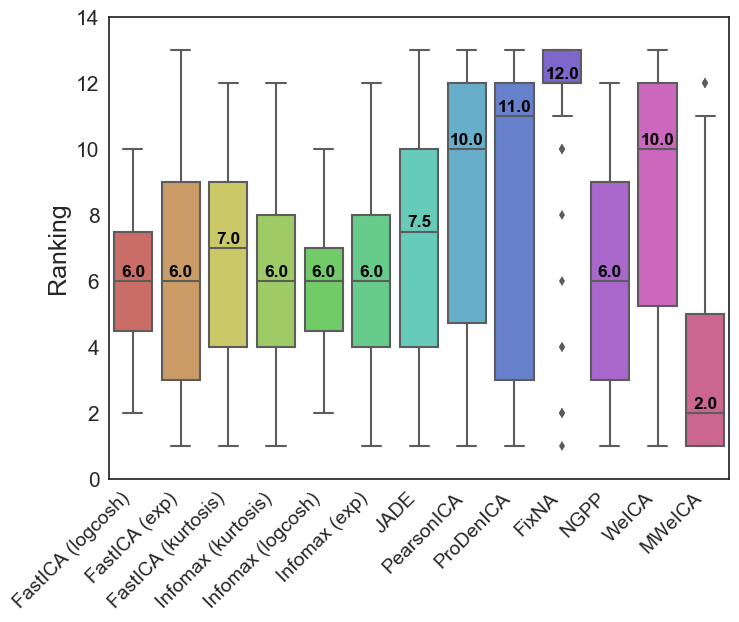}
	\includegraphics[width=0.49\linewidth]{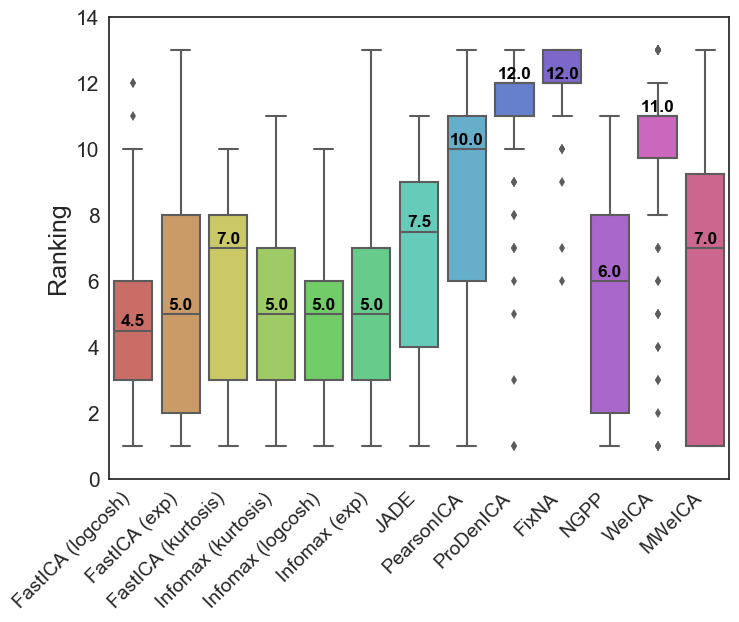}
	\caption{Results of ranking (lower is better) for Tucker Measure on synthetic bootstrap data.  On the left one can observe results for two dimensional problem solution, and on the right we can see results for three dimensional composition. }
	\label{rank_img_2}
\end{figure}
	
\textbf{STEP 1.}
Directly from the definition, the	linear independence of the weighted data means
$$
M_{00}(\phi_1,\phi_2)M_{11}(\phi_1,\phi_2)=M_{10}(\phi_1,\phi_2) M_{01}(\phi_1,\phi_2).
$$
for every normal densities $\phi_1,\phi_2$ 
of the form
$$
\phi_i=N(p_i,1) \mbox{ for }p=(p_1,p_2) \in B(\bar p,r) \subset \R^2
$$
(and their rescaling by arbitrary
constant).
	
\textbf{STEP 2.}
We define $\phi_a(x)=\exp(-\frac{1}{2}x^2+ax)$ and $M_{ij}(a,b)=M_{ij}(\phi_a,\phi_b)$.
Thus the above implies that
\begin{equation} \label{eq:linear}
\begin{array}{r}
M_{00}(a,b)M_{11}(a,b)=M_{10}(a,b) M_{01}(a,b) 
\mbox{ for }(a,b) \in B(\bar p,r).
\end{array}
\end{equation}
	
Since
	\begin{align*}
	\frac{\partial}{\partial a}M_{i,j}(a,b)&=M_{i+1,j}(a,b),\\
	\frac{\partial}{\partial b}M_{i,j}(a,b)&=M_{i,j+1}(a,b)
	\end{align*}
by differentiating \eqref{eq:linear} with respect to the first variable we get
$$	M_{10}M_{11}+M_{00}M_{21}=M_{20}M_{01}+M_{10}M_{11},
$$
which trivially yields
$
M_{00}M_{21}=M_{20}M_{01}.
$
Analogous formula holds for the second variable, yielding
$
M_{00}M_{12}=M_{10}M_{02}.
$ 
By applying induction over indexes $i$ and $j$ we can verify that
$
M_{00}M_{ij}=M_{i0}M_{0j}.
$
By notation $m=M_{00}$, $m^1_i=M_{i0}$, $m^2_j=M_{0j}$ (moments with respect to 
only one variable), we obtain that
\begin{equation} \label{eq:formula}
M_{ij}=\frac{m_i^1}{m} \cdot  \frac{m_j^2}{m}.
\end{equation}
We apply the above for $\phi_1,\phi_2$ at $\bar p=(\bar p_1,\bar p_2)$.
	
\textbf{STEP 3.}
We consider the density of weighed dataset $\X_w$ by:
$
f_w(x_1,x_2)=\frac{1}{m}\phi_1(x_1)\phi_2(x_2) f(x_1,x_2).
$
Then the marginal densities are given by
\begin{align*}
f_1(x_1)&=\frac{1}{m}\int \phi_1(x_1) \phi_2(v_2)f(x_1,v_2)dv_2=\int f_w(x_1,v_2)dv_2, \\
f_2(x_2)&=\frac{1}{m}\int \phi_1(v_1) \phi_2(x_2)f(v_1,x_2)dv_1 =\int f_w(v_1,x_2)dv_1.
\end{align*}



\begin{figure}[!h]
\normalsize
\begin{center}
\subfigure[Original images.] {\label{fig:image_ICA_int_1}
\includegraphics[width=0.23\linewidth]{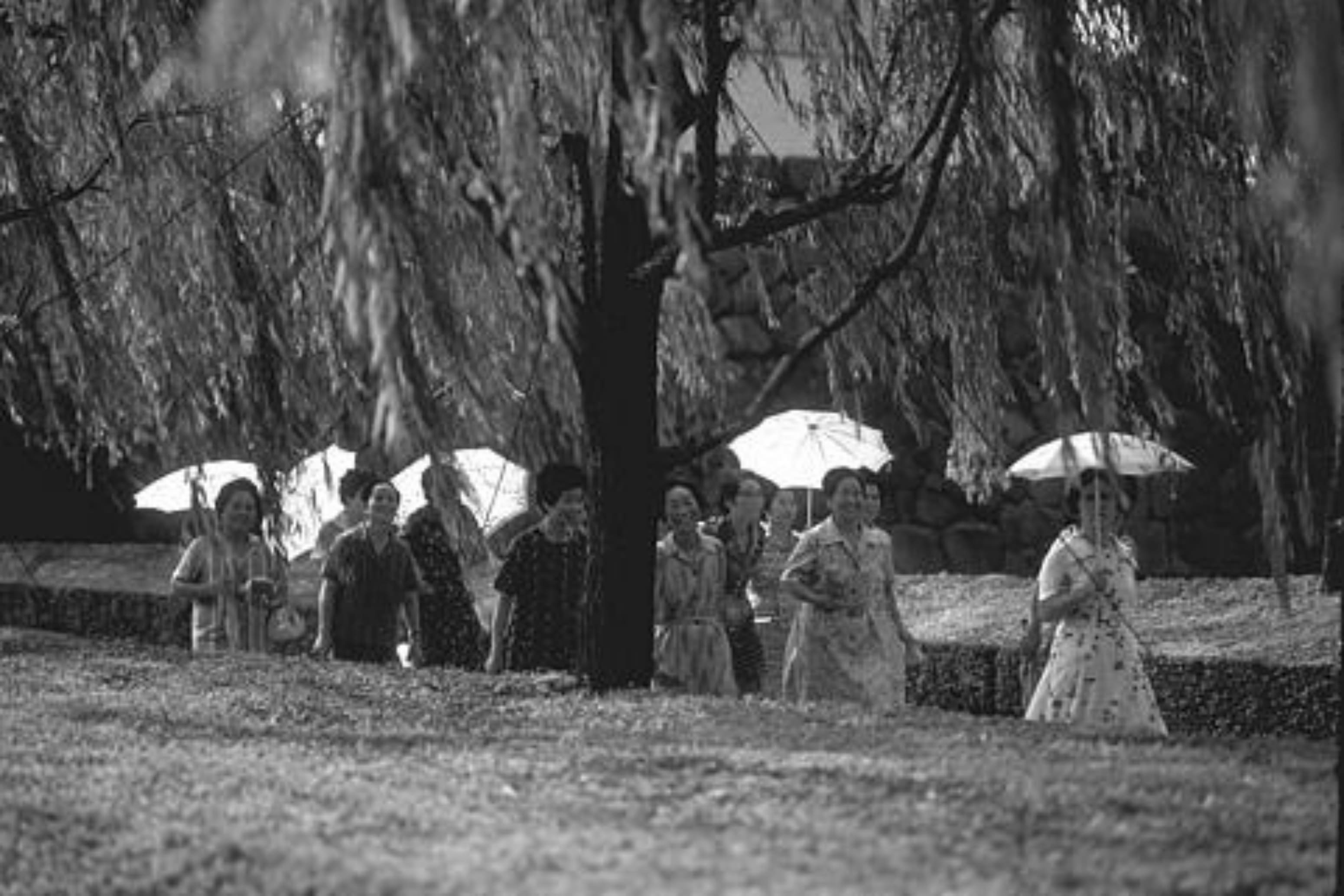}
\includegraphics[width=0.23\linewidth]{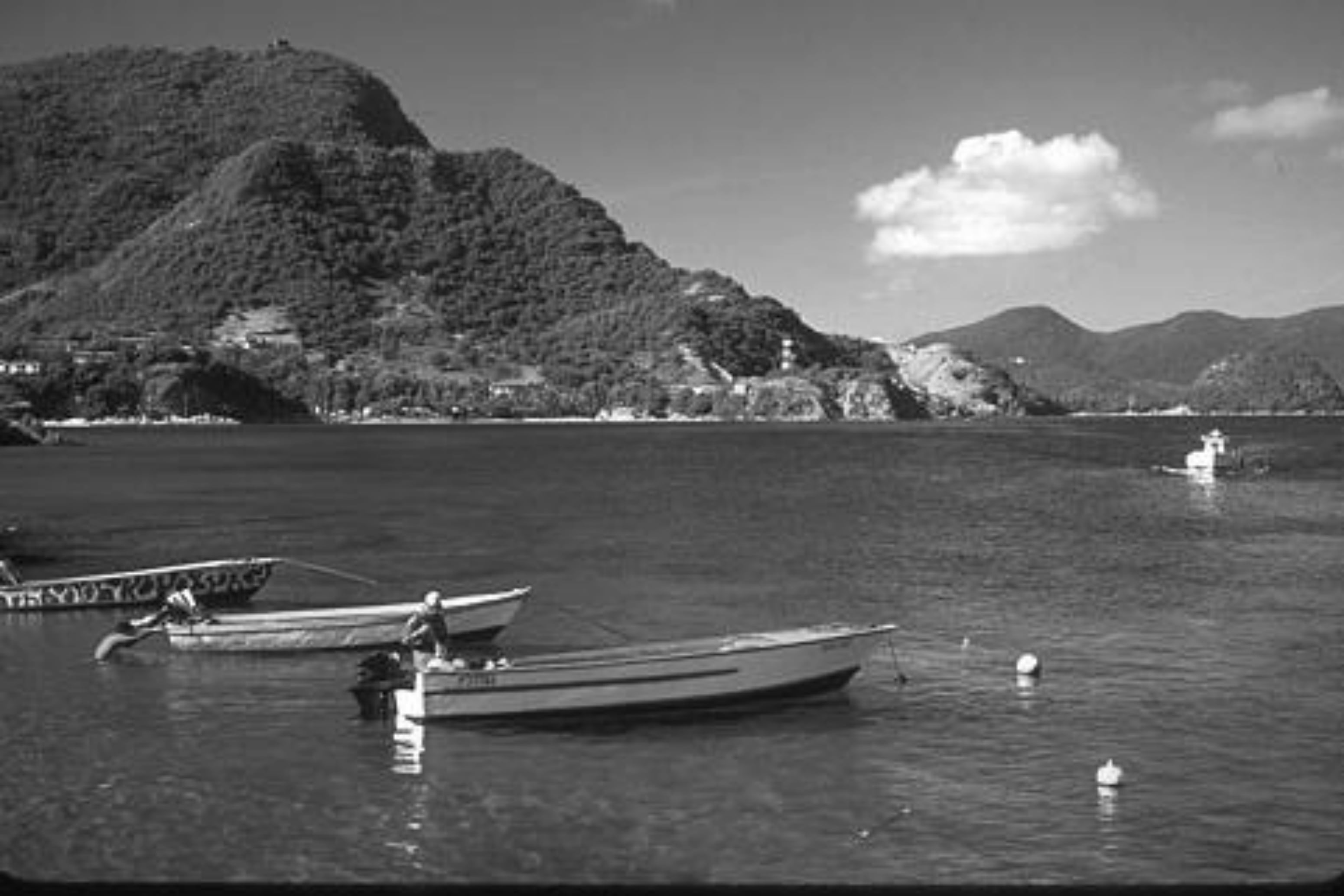}
}
\subfigure[Mix of images done via random linear projection.] {\label{fig:image_ICA_int_2}
\includegraphics[width=0.23\linewidth]{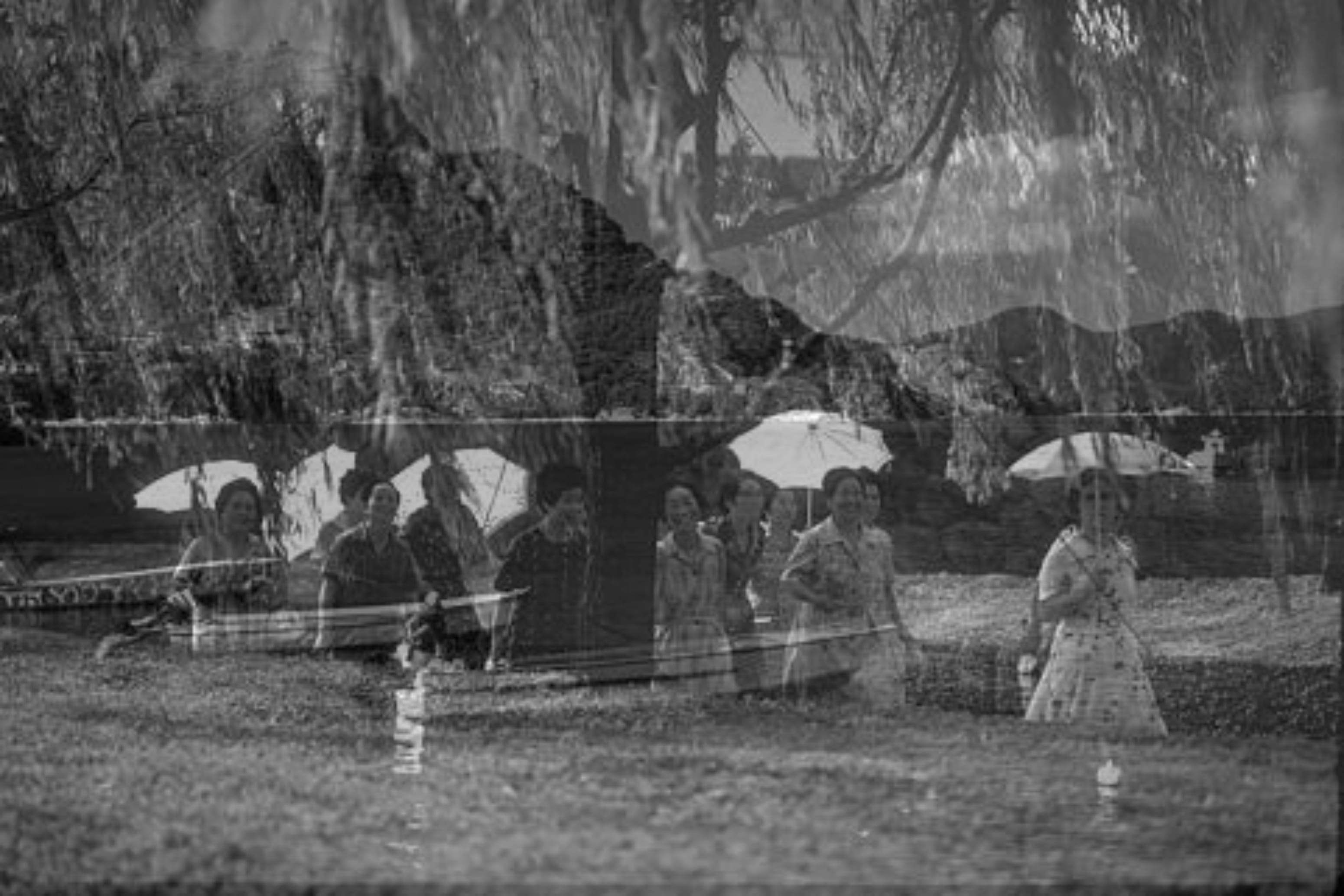}
\includegraphics[width=0.23\linewidth]{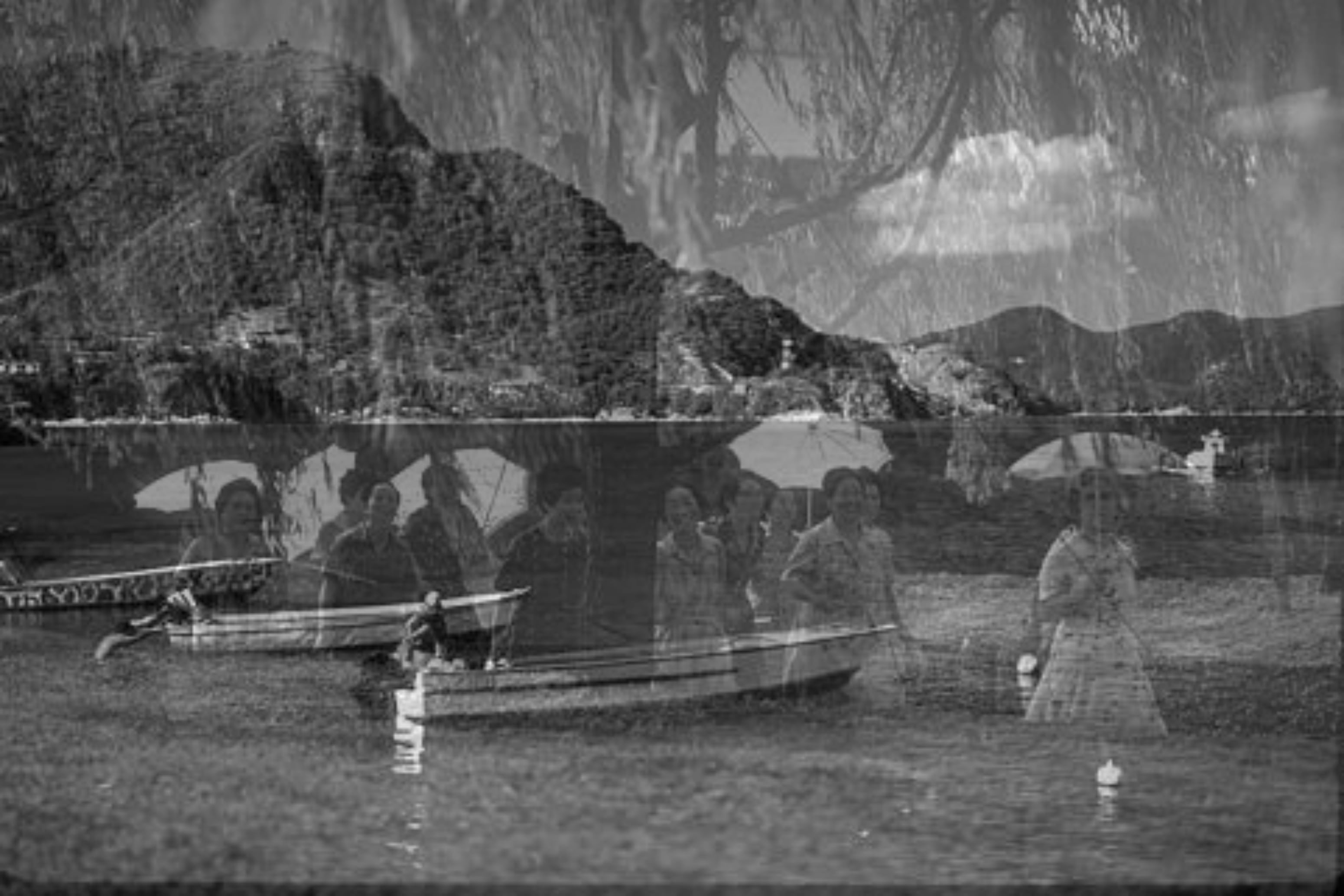}
}
\subfigure[\mWeICA.] {\label{fig:image_ICA_int_3}
\includegraphics[width=0.23\linewidth]{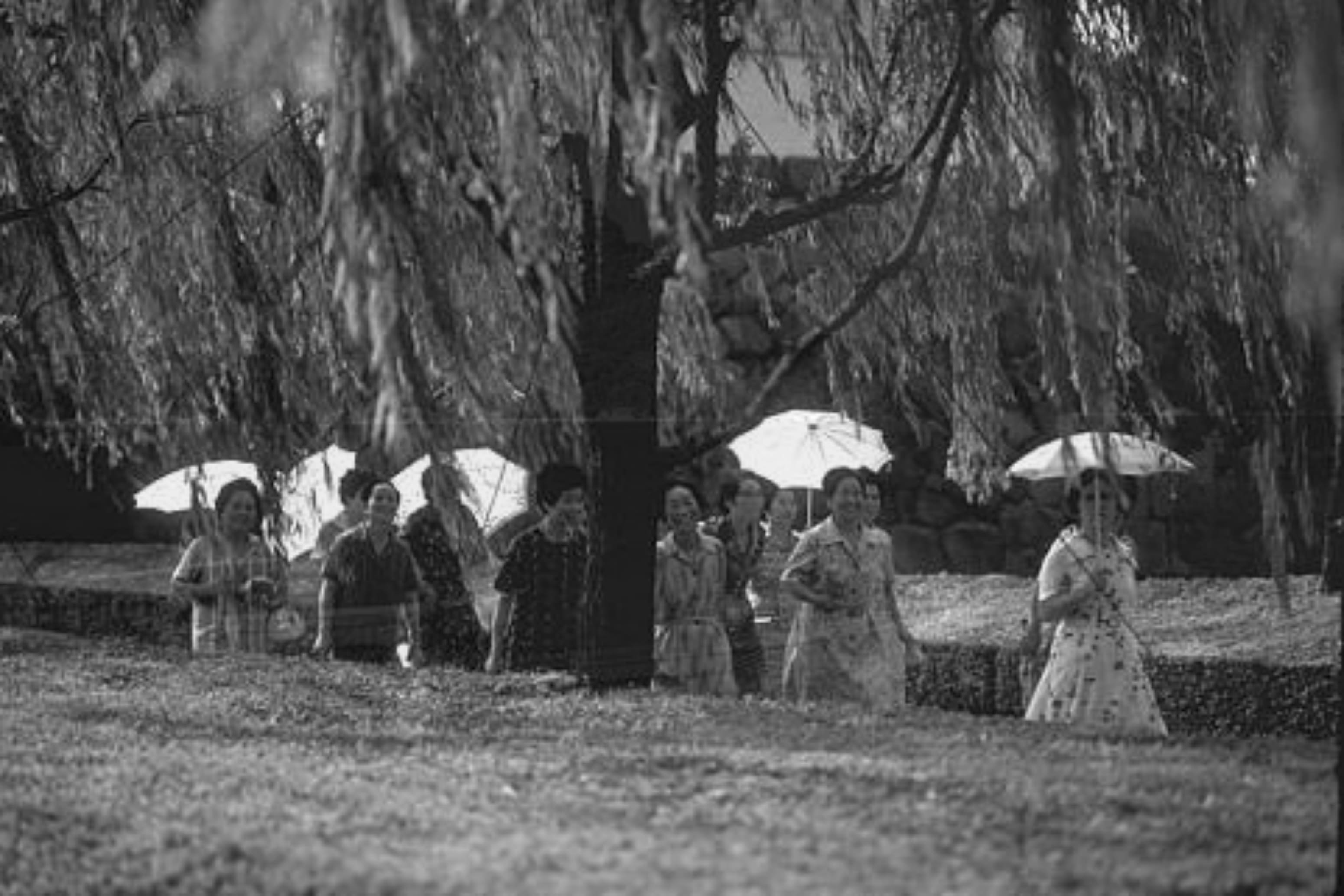} 
\includegraphics[width=0.23\linewidth]{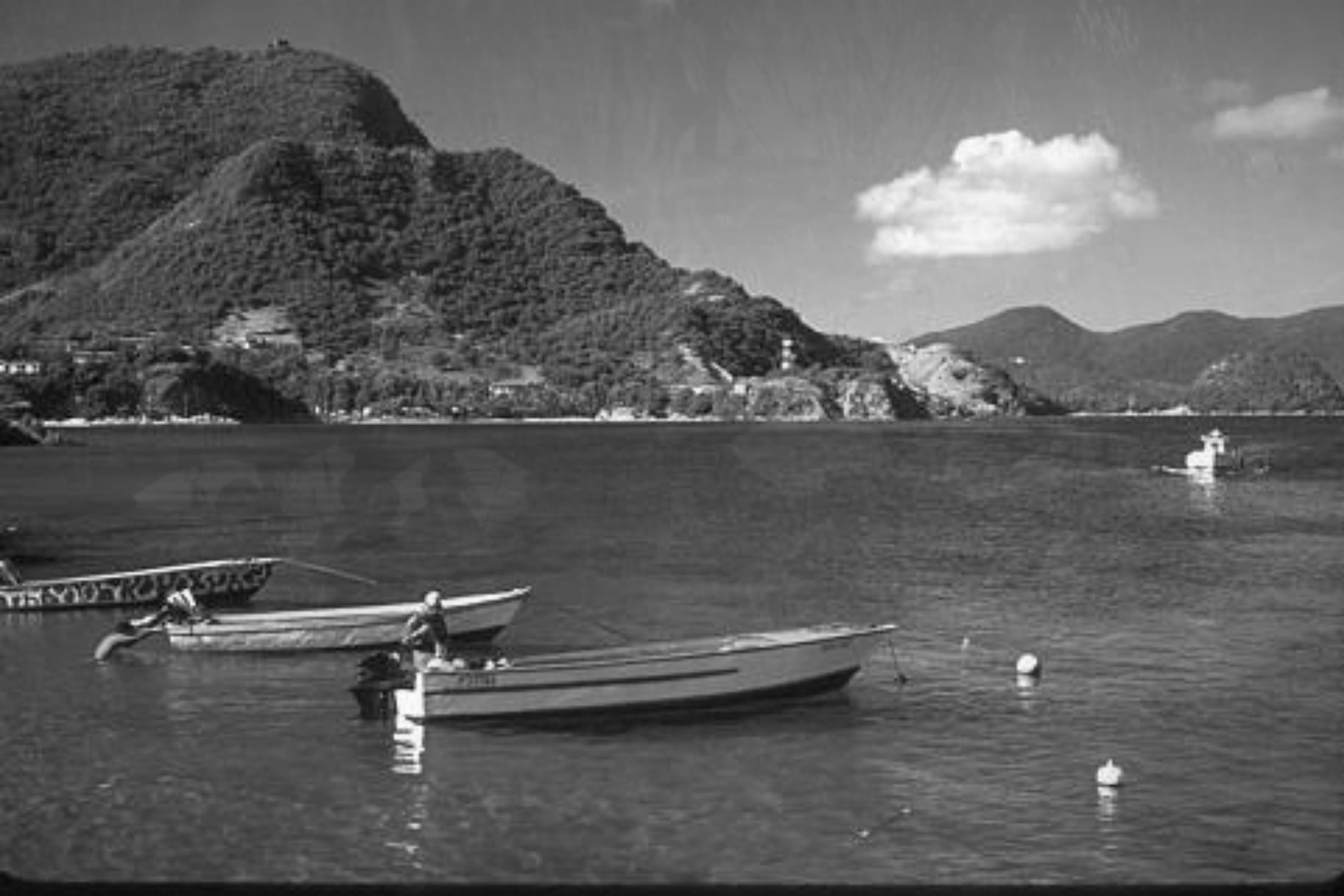} 
}
\subfigure[FastICA.] {\label{fig:image_ICA_int_4}
\includegraphics[width=0.23\linewidth]{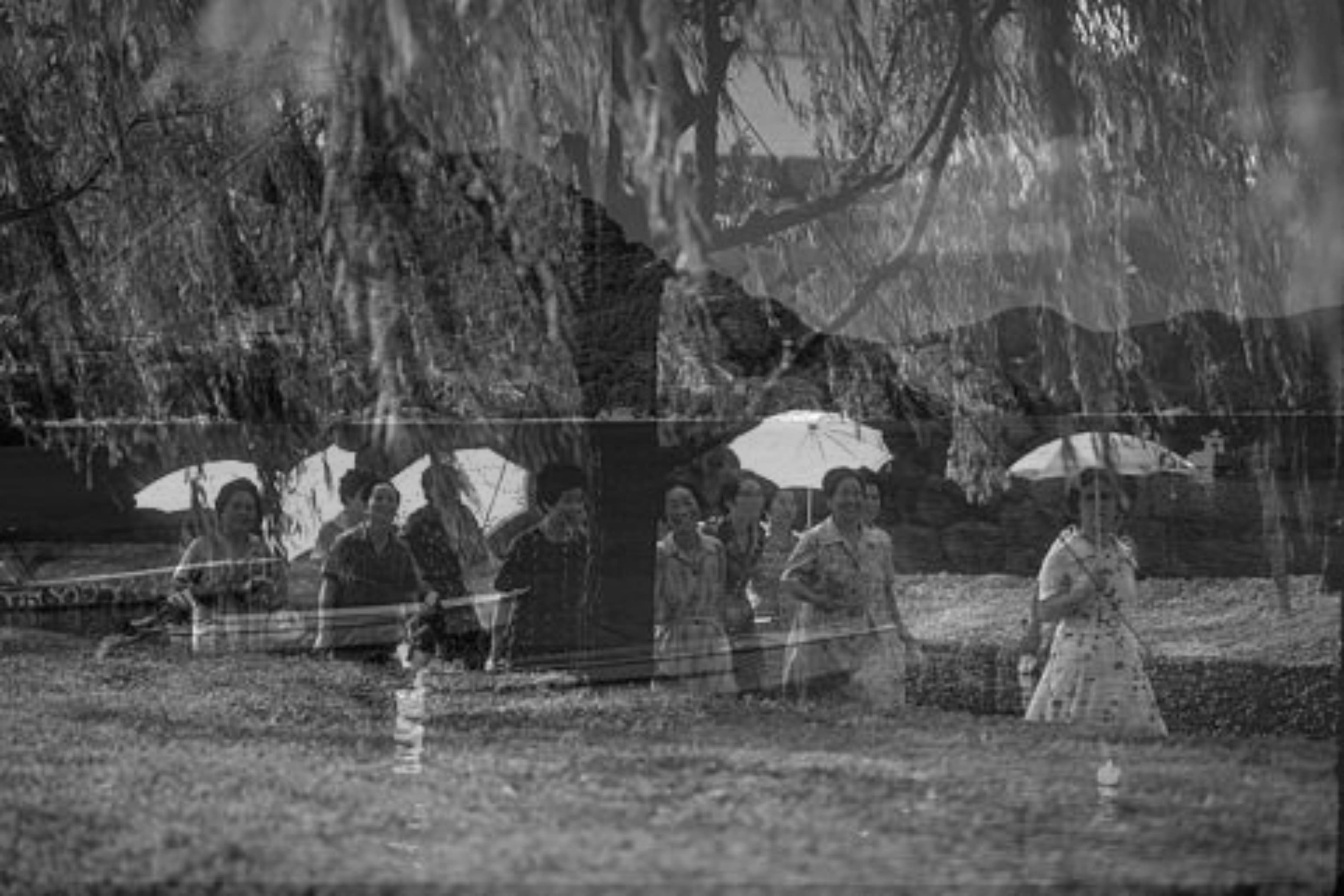} 
\includegraphics[width=0.23\linewidth]{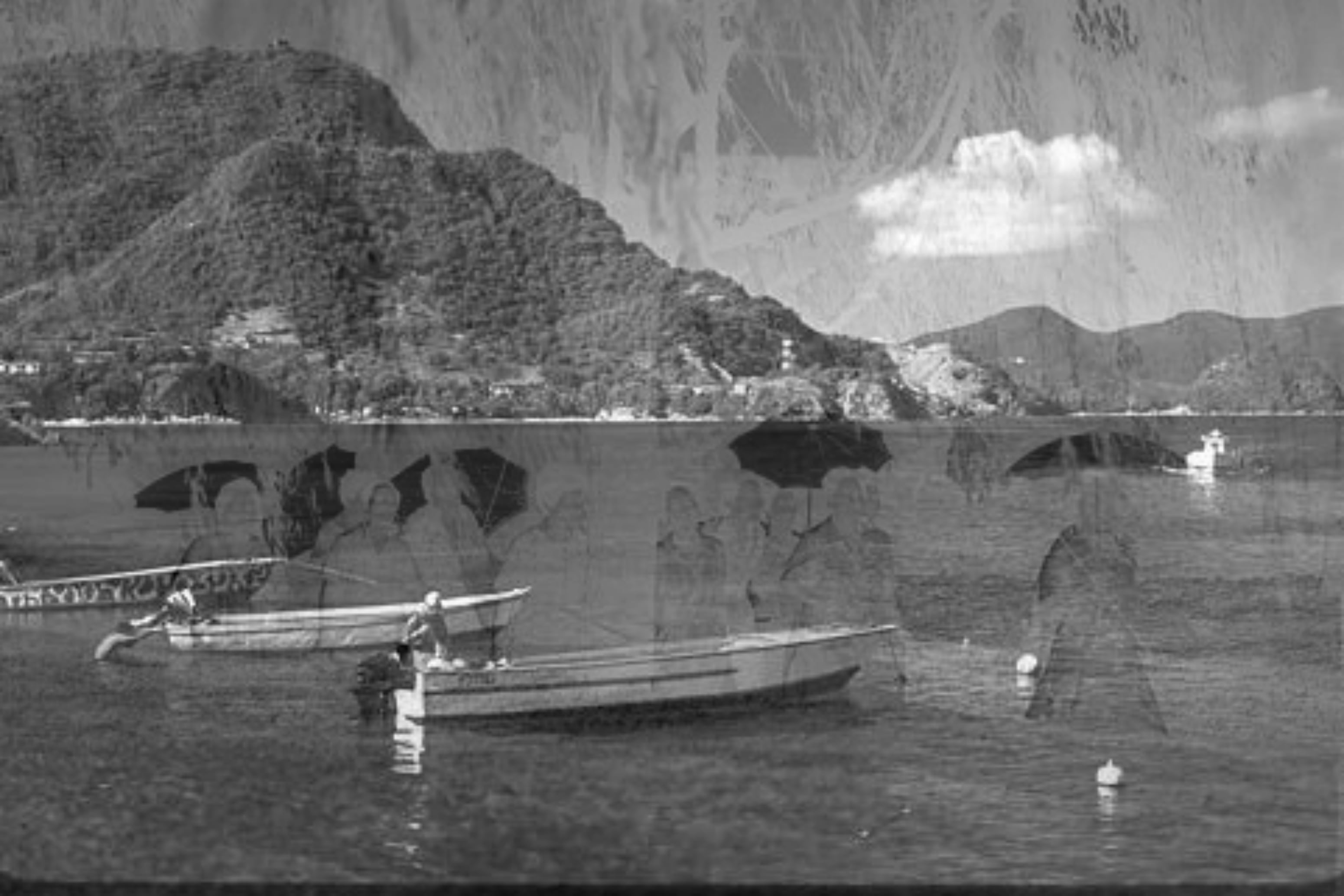} 
}
\end{center}
\caption{Comparison of images separation by \mWeICA with FastICA. One can spot that iterative approach of FastICA has some problems even with linear mixing, which was easier to solve for other two methods. }
\label{fig:worse}
\end{figure}

Let $g(x_1,x_2)=f_1(x_1)\cdot f_2(x_2)$.
Now by \eqref{eq:formula} we obtain moments of $f_w$ coincide with that of $g$:
\begin{align*}
&\iint v_1^iv_2^j f_w(v_1,v_2)dv_1dv_2 \\ 
&=\iint v_1^i f_w(v_1,v_2)dv_1dv_2 \cdot  \iint v_2^j f_w(v_1,v_2)dv_1dv_2
\\	&=\int v_1^i  \int f_w(v_1,v_2)dv_2  dv_1 \cdot \int v_2^j  \int f_w(v_1,v_2)dv_1  dv_2
\\
&=\int v_1^i f_1(v_1) dv_1 \cdot \int v_2^j f_2(v_2) dv_2=\iint v_1^i v_2^j g(v_1,v_2) dv_1 dv_2.
\end{align*}

But densities which have the same moments obviously coincide, which yields
$$
f_w(x_1,x_2)=g(x_1,x_2)=f_1(x_1) \cdot f_2(x_1).
$$
Consequently $f_w$ has independent coordinates, and therefore
$$
f(x_1,x_2)=m \cdot \frac{f_1(x_1)}{\phi_1(x_1)} \cdot
\frac{f_2(x_2)}{\phi_2(x_2)}, 
$$
which trivially yields that also $f$ has independent coordinates.
\end{proof}

\begin{remark}
Making use of the above theorem we can define a new index of independence, namely for random variable $\X$ we can define
$$
\DE(\X)=\mathbf{E} \{\DE(\cov \X_{[m]}): m \sim \X\}.
$$
Given a sample $X$ from the random variable $\X$, we can estimate the above index by computing
$$
\frac{1}{n}\sum_{i=1}^n \DE(\cov X_{[m_i]} ),
$$
where $(m_i)_{i=1..n}$ are randomly taken $n$ elements from the set $X$. 
\end{remark}

Now we proceed to the theorem which show the inverse result for Observation \ref{pr:1} also holds. Applying the previous theorem for $\Y=W^T \X$ we directly obtain the following corollary

\begin{corollary}
\label{tw:odwrotne}
We consider random vector $\X$.
We assume that an invertible square matrix $W$ is such that 
\begin{equation} \label{eq:10}
W^T \cov \X_{[m]} W
\end{equation}
is diagonal for every $m \in \R^d$.

Then $W^T \X$ has independent components.
\end{corollary}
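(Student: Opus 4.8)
The plan is to reduce the statement directly to Theorem~\ref{th:lin} by passing to the transformed vector $\Y=W^T\X$. First I would rewrite the hypothesis in terms of a covariance of $\Y$. Applying the affine covariance rule \eqref{eq:second} with $A=W^T$ and $b=0$ gives $\cov(W^T\X_{[m]})=W^T\cov\X_{[m]}W$, so the assumption that $W^T\cov\X_{[m]}W$ is diagonal for every $m$ says precisely that $\cov(W^T\X_{[m]})$ is diagonal for every $m$.

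Next I would identify $W^T\X_{[m]}$ as a weighting of $\Y$ itself. Applying the affine weighting identity \eqref{eq:lin}, again with $A=W^T$, $b=0$, yields $W^T\X_{[m]}=(W^T\X)_{[W^T m]}=\Y_{[W^T m]}$. Combining this with the previous step, $\cov(\Y_{[W^T m]})$ is diagonal for every $m\in\R^d$. Because $W$, and hence $W^T$, is invertible, the map $m\mapsto W^T m$ is a bijection of $\R^d$; setting $m'=W^T m$ therefore gives that $\cov(\Y_{[m']})$ is diagonal for every $m'\in\R^d$.

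Then I would invoke the elementary equivalence already used in the proof of Observation~\ref{pr:1}: a random vector has linearly independent (uncorrelated) components exactly when its covariance matrix is diagonal. Consequently $\Y_{[m']}$ has linearly independent components for every $m'\in\R^d$, and in particular for every $m'$ in any ball $B(\bar p,r)\subset\R^d$. Applying Theorem~\ref{th:lin} to $\Y$ then gives that $\Y=W^T\X$ has independent components, which is exactly the claim.

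The argument is essentially a bookkeeping reduction, so there is no deep obstacle. The only points requiring care are the correct matching of the two affine identities \eqref{eq:lin} and \eqref{eq:second}, so that ``diagonal covariance of $W^T\X_{[m]}$'' and ``linear independence of $\Y_{[m']}$'' genuinely refer to the same object, and the change of variables $m\mapsto W^T m$, where invertibility of $W$ is what ensures that the global hypothesis over all $m\in\R^d$ transports to a statement over all $m'\in\R^d$, hence over any ball, so that the hypothesis of Theorem~\ref{th:lin} is actually met.
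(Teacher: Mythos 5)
Your proof is correct and takes exactly the route the paper intends: the paper states the corollary as a direct application of Theorem~\ref{th:lin} to $\Y=W^T\X$, and your argument merely makes explicit the bookkeeping via \eqref{eq:lin}, \eqref{eq:second}, and the change of variables $m\mapsto W^Tm$ (using invertibility of $W$) that the paper leaves implicit.
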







\section{Experiments} \label{experiments}

In this section we applied our algorithms to blind source signal separation problem. We present results for \mWeICA{} on the synthetic bootstrap set and real mixes of pictures.  We will compare quality of retrieved signals from both approaches to already known solutions using rankings on Tucker Congruency Coefficient \citep{Tucker} as well as time complexity for fastest of the approaches.

\paragraph{Image separation}

Typical test for ICA task is based on the separation of mixed images. In our experiments we have used multiple images from the Berkeley Segmentation Dataset with various resolutions. 

First we took pairs of images from above source, and use them as base signals combined by mixing matrix generated separately for each pair. Clearly we need to use signals with the same resolution to make appropriate mixing. Due to the aforementioned action we obtain pair of new images. We used them as a signal, on which we perform reconstruction to base components. The main goal was to achieve separation onto original pictures, based only on those mixed signals. Exemplary results are presented in Fig. \ref{fig:worse}. 
It can be noticed in Fig. \ref{fig:worse}, that standard FastICA algorithm achieves notably worse result than our approach.

Results on that benchmark set (see Fig. \ref{fig:imgaccuracy}) shows that \mWeICA{} works very well and obtain second best score in the ranking. Only NGPP gives better score, but such method works only in reasonable small dimension (see Fig. \ref{timeComparision}).  
The difference between methods can be see as artifacts in background, see Fig. \ref{fig:worse}.

\begin{figure}[h]
\centering
    \subfigure[Original signals from EEG]{
	\includegraphics[width=0.47\linewidth]{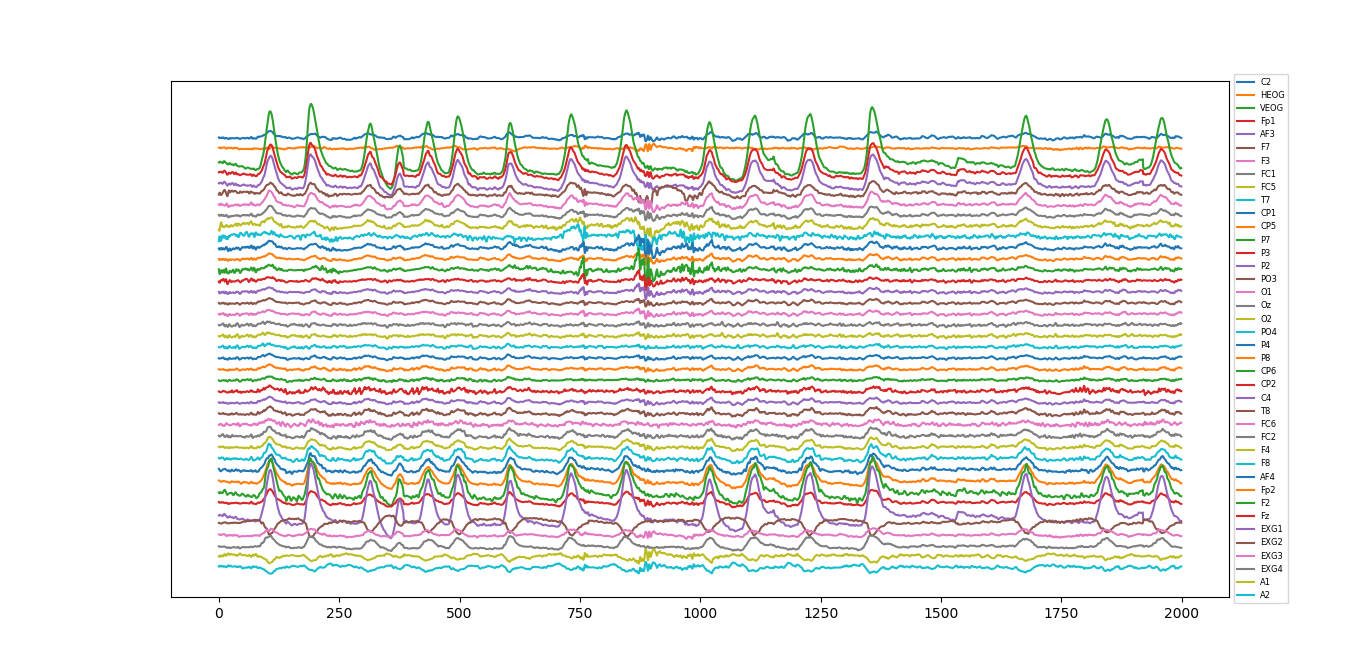}
	\label{eeg1}
	}
    \subfigure[Signals retrived from \mWeICA{}]{
	\includegraphics[width=0.47\linewidth]{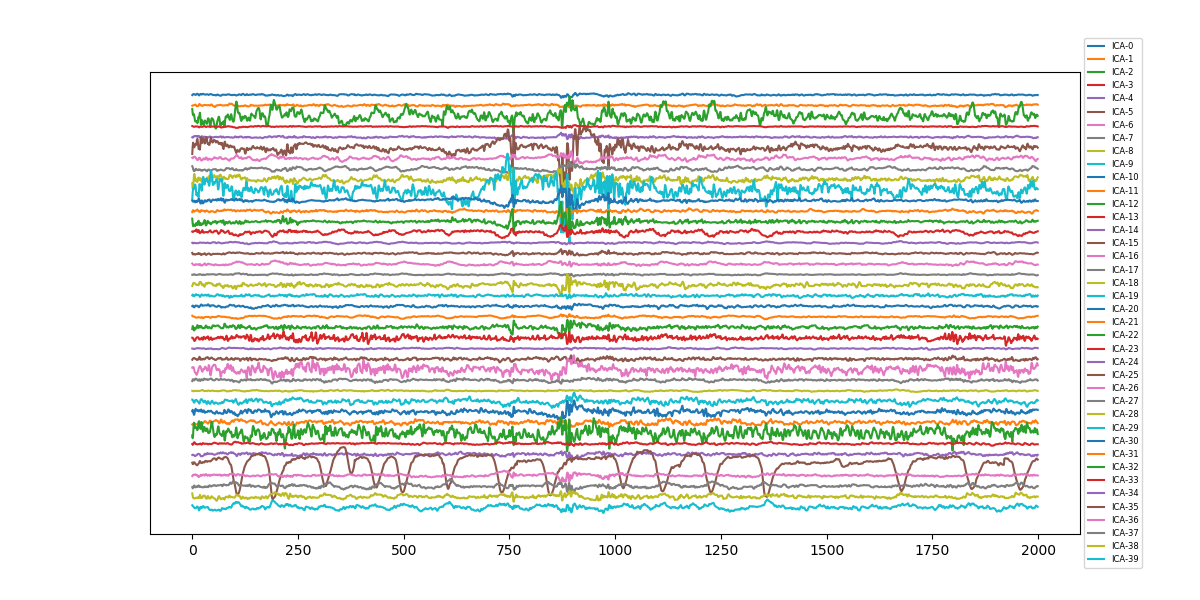}
	\label{eeg2}
	}
	\subfigure[Deleted signals from umixed EEG]{
	\includegraphics[width=0.47\linewidth]{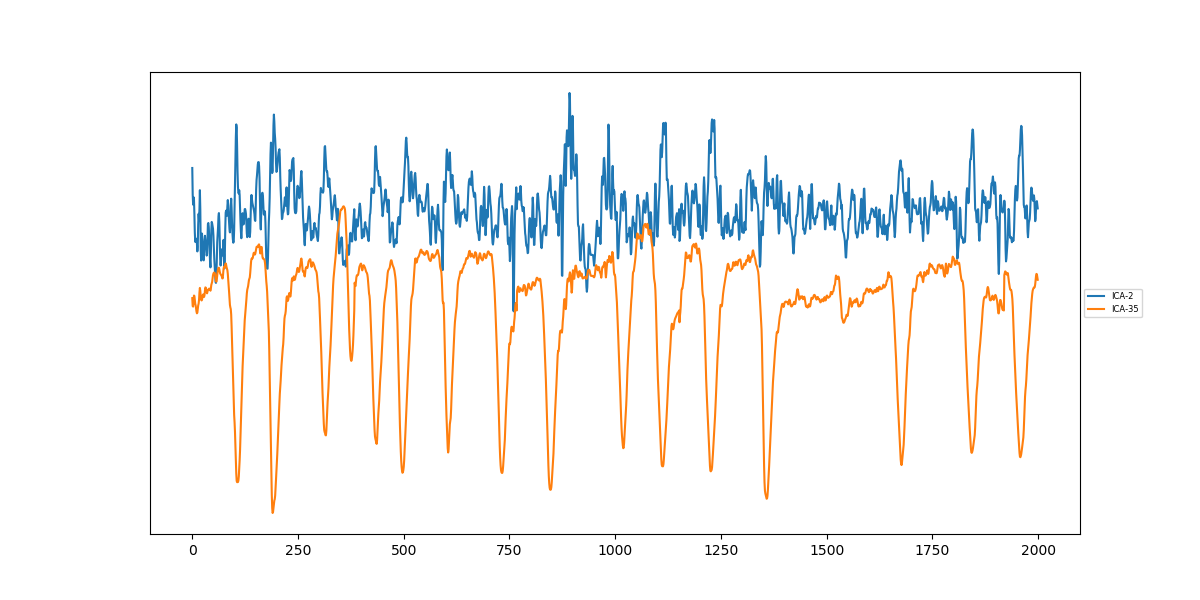}
	\label{eeg3}
	}
    \subfigure[Original EEG signal with removed components 2 and 35]{
	\includegraphics[width=0.47\linewidth]{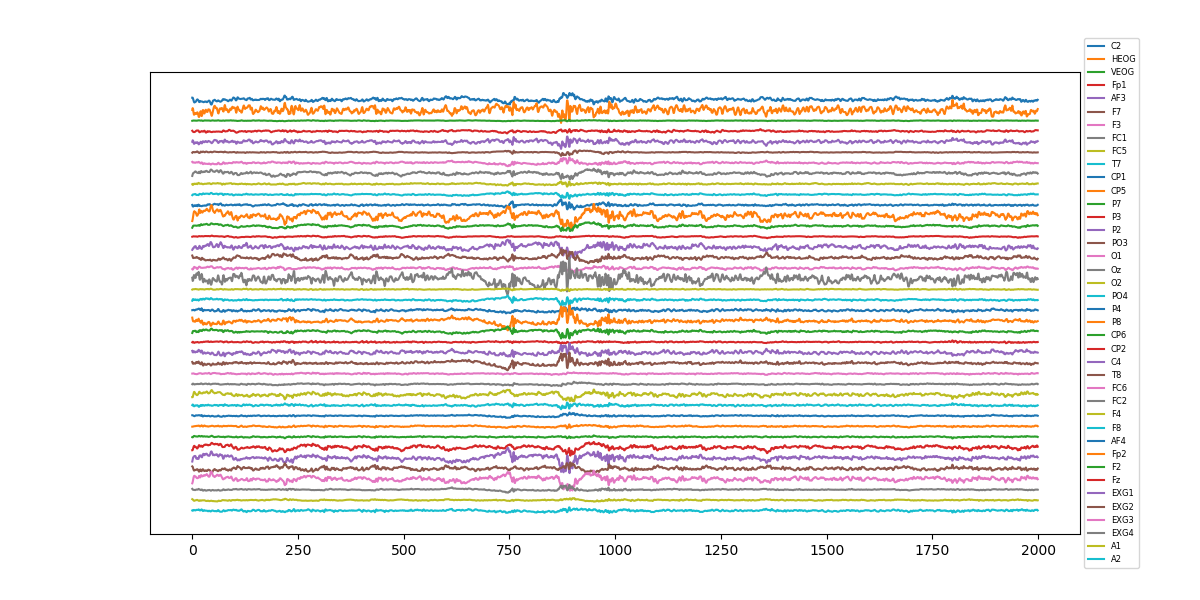}\label{eeg4}
	}
	\caption{Results of \mWeICA{} in case of EEG data.\label{eeg}}
\end{figure}

\paragraph{Computational efficiency} \label{timeCp}

We verify the computational times of WeICA and alternative ICA algorithms. We examine the influence on the number of data set instances and dimension of data. We consider the classical image separation problem, where images from the USC-SIPI Image Database (of size $512 \times 512$ pixels) are mixed together. We use
ten mixed examples and present mean evaluation times. To vary the size of data,
images are scaled to different sizes and the running times are reported in each case. 

One can observe in Fig. \ref{timeComparision} that \mWeICA{} has similar computational time as classical models with respect to dimension and one of the best one (only WeICA and JADE are more effective) in the case of number of samples.   
Summarizing, we obtained numerically effective method which gives second best score in the case of source separation problem, see Fig. \ref{fig:imgaccuracy}. 


\paragraph{Bootstrap tests}

Since image separation experiment is quite specific, we verify ICA algorithms on separating bootstrap samples task. For this purpose, we consider a real data set retrieved from UCI repository\footnote{\url{https://archive.ics.uci.edu/ml/datasets/glass+identification}} and randomly select two (and tree) coordinates to independently create 100 bootstrap samples. In the case where the distribution of the initial sample is unknown, bootstrapping is of special help in that it provides information about the distribution. Furthermore, this procedure allows to construct really independent samples. The results are again measured by Tucker's congruence coefficient.
The results presented in Fig. \ref{rank_img_2} show that \mWeICA{} obtains one of the best scores. 

\paragraph{EEG} The Electroencephalography (EEG) is an electophysiological monitoring method of recording electrical activity of the brain. In clinical contexts, EEG refers to the recording of the brain's spontaneous electrical activity over a period of time, as recorded from multiple electrodes placed on the scalp. Signals from those electrodes are mixed according to linear superposition principle. In this context ICA is used to undo the mixing \cite{EEG} and preliminary step of cleaning the data. In our experiment we focused on detection of blinking and eye movement during EEG test. 

For EEG signals, the rows of the matrix $\X$ are the signals recorded on different electrodes. Unmixed rows of the output matrix $W^T\X$  are time courses of activation of the ICA components The columns of the inverse matrix $(W^T)^{-1}$, give the projection strengths of the respective components onto the scalp sensors.

Data set of EEG signals used in our analysis was collected from 40 scalp electrodes and is presented on Fig. \ref{eeg}\protect\subref{eeg1}. Data set was analyzed in \mWeICA{} framework, and produced unmixed signals presented on Fig. \ref{eeg}\protect\subref{eeg2}. Fig. \ref{eeg}\protect\subref{eeg3} presents separated signals, which we choose as an eye blinking artifacts. After removing those two signals and going back to original sitation, one can easily spot that eye blinking spikes disappeared (Fig. \ref{eeg}\protect\subref{eeg4}) - which was our goal.

\paragraph{Sound separation} Another experiment that was performed during testing of \mWeICA{} was sound separation. We took 200 groups of signals. Each group consisted 10 signals from \href{http://marsyas.info/downloads/datasets.html}{Marsyas Music Speech data-set}. For every group distinct mixing matrix was produced and applied to produce 10 mixes of signals, which were an input for ICA methods. We expected to retrieve as much base signals as it was possible. Source sounds lasted 30 seconds, giving 10 dimensional time series containing 661500 point to analyze. As it was shown in Section \ref{timeCp} and Fig. \ref{timeComparision}, \mWeICA{} outperforms other methods in computational efficiency.

Due to high dimension of our mixtures only couple tested algorithms were capable to work in reasonable amount of time. Results presented in Fig. \ref{fig:sound-10dim} shows that \mWeICA{} retrieved comparable amount of information as the best methods.

\begin{figure}[!h]
\centering
\includegraphics[width=0.8\linewidth]{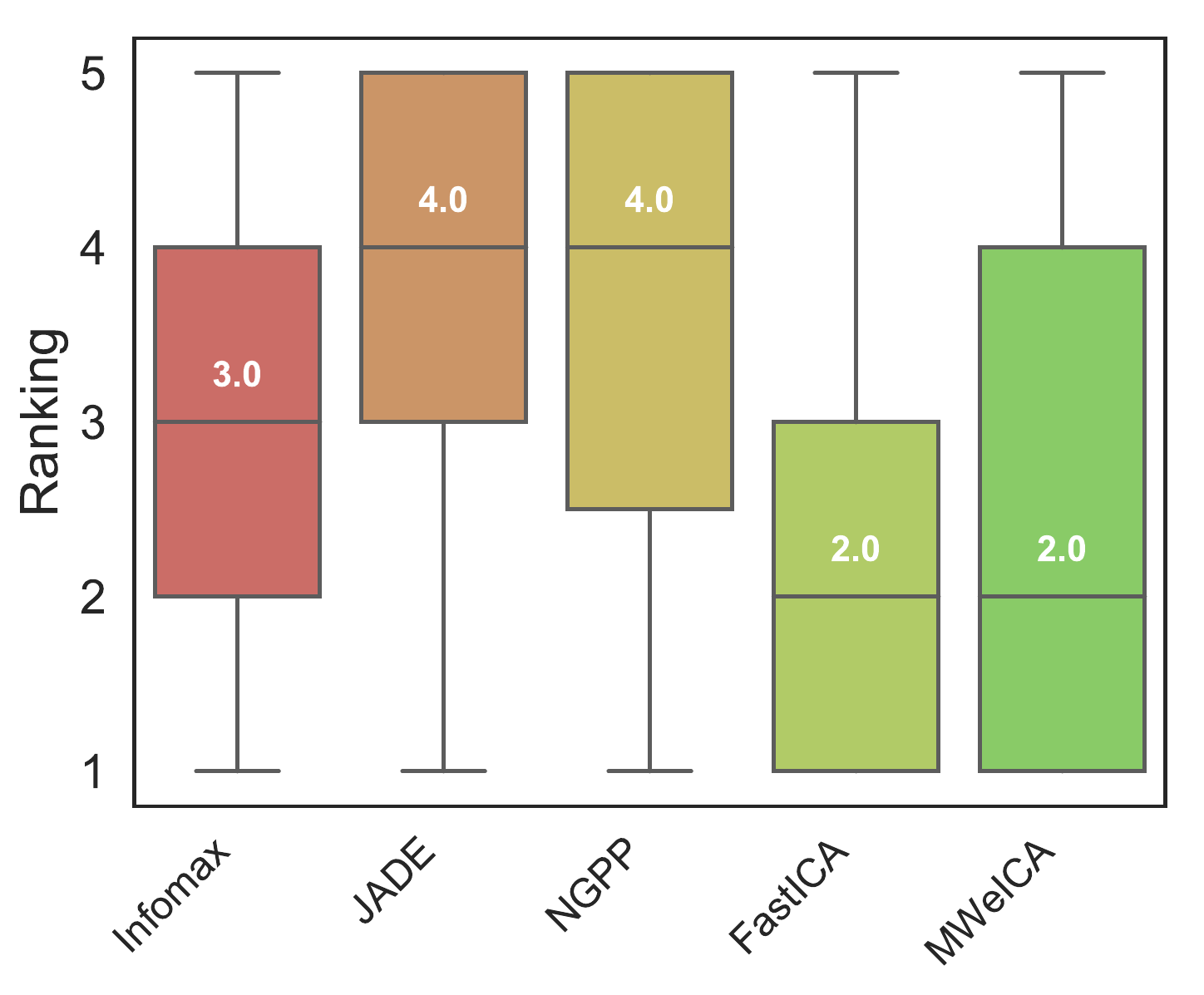} 
\caption{Results of ranking (lower is better) for Tucker Congruence Measure on sound mixes of 10 sources. Only couple fastest and best performing (in lower dimensions) methods from other examples were tested.}
	\label{fig:sound-10dim}
\end{figure}


\section{Conclusion}

In this paper we have presented \mWeICA{}, a fast ICA algorithm, which in its structure is similar to PCA. Our experiments show that \mWeICA{} achieves comparable results to state-of-the-art solutions for ICA task.

Our idea is based on theoretical result, which says that exact diagonalization of weighted covariances guarantees independence. Such result allows us to construct independence measure, which can be used in ICA framework. 
In the further work we plan to verified a possibility to use the method as a measure of independence in deep neural networks.

\bibliography{bib}

\begin{thebibliography}{}

\bibitem[\protect\citename{Bach \& Jordan, }2002]{bach2002kernel}
Bach, Francis~R, \& Jordan, Michael~I. 2002.
\newblock Kernel independent component analysis.
\newblock {\em Journal of machine learning research}, {\bf 3}(Jul), 1--48.

\bibitem[\protect\citename{Beckmann \& Smith, }2004]{beckmann2004probabilistic}
Beckmann, Christian~F, \& Smith, Stephen~M. 2004.
\newblock Probabilistic independent component analysis for functional magnetic
  resonance imaging.
\newblock {\em Medical Imaging, IEEE Transactions on}, {\bf 23}(2), 137--152.

\bibitem[\protect\citename{Beckmann \& Smith, }2005]{beckmann2005tensorial}
Beckmann, Christian~F, \& Smith, Stephen~M. 2005.
\newblock Tensorial extensions of independent component analysis for
  multisubject FMRI analysis.
\newblock {\em Neuroimage}, {\bf 25}(1), 294--311.

\bibitem[\protect\citename{Bell \& Sejnowski, }1995]{bell1995information}
Bell, Anthony~J, \& Sejnowski, Terrence~J. 1995.
\newblock An information-maximization approach to blind separation and blind
  deconvolution.
\newblock {\em Neural computation}, {\bf 7}(6), 1129--1159.

\bibitem[\protect\citename{Bressan {\em et~al.}, }2003]{bressan2003using}
Bressan, Marco, Guillamet, David, \& Vitria, Jordi. 2003.
\newblock Using an ICA representation of local color histograms for object
  recognition.
\newblock {\em Pattern Recognition}, {\bf 36}(3), 691--701.

\bibitem[\protect\citename{Brunner {\em et~al.}, }2007]{brunner2007spatial}
Brunner, Clemens, Naeem, Muhammad, Leeb, Robert, Graimann, Bernhard, \&
  Pfurtscheller, Gert. 2007.
\newblock Spatial filtering and selection of optimized components in four class
  motor imagery EEG data using independent components analysis.
\newblock {\em Pattern Recognition Letters}, {\bf 28}(8), 957--964.

\bibitem[\protect\citename{Cardoso, }1996]{Cardoso-1996}
Cardoso, Jean-François;~Souloumiac, Antoine. 1996.
\newblock Jacobi Angles for Simultaneous Diagonalization.
\newblock {\em SIAM Journal on Matrix Analysis and Applications}, {\bf 17}(01).

\bibitem[\protect\citename{Cardoso, }1999]{cardoso1999high}
Cardoso, Jean-Francois. 1999.
\newblock High-order contrasts for independent component analysis.
\newblock {\em Neural computation}, {\bf 11}(1), 157--192.

\bibitem[\protect\citename{Choi {\em et~al.}, }2005]{choi2005fault}
Choi, Sang~Wook, Martin, Elaine~B, Morris, A~Julian, \& Lee, In-Beum. 2005.
\newblock Fault detection based on a maximum-likelihood principal component
  analysis (PCA) mixture.
\newblock {\em Industrial and engineering chemistry research}, {\bf 44}(7),
  2316--2327.

\bibitem[\protect\citename{Comon, }1994]{ICA:1}
Comon, Pierre. 1994.
\newblock Independent component analysis, a new concept?
\newblock {\em Signal processing}, {\bf 36(3)}, 287–314.

\bibitem[\protect\citename{Dagher \& Nachar, }2006]{dagher2006face}
Dagher, Issam, \& Nachar, Rabih. 2006.
\newblock Face recognition using IPCA-ICA algorithm.
\newblock {\em IEEE transactions on pattern analysis and machine intelligence},
  {\bf 28}(6), 996--1000.

\bibitem[\protect\citename{Delorme {\em et~al.}, }2007]{delorme2007enhanced}
Delorme, Arnaud, Sejnowski, Terrence, \& Makeig, Scott. 2007.
\newblock Enhanced detection of artifacts in EEG data using higher-order
  statistics and independent component analysis.
\newblock {\em Neuroimage}, {\bf 34}(4), 1443--1449.

\bibitem[\protect\citename{Eidinger, }2004]{Eidinger-2004}
Eidinger, E.;~Yeredor, A. 2004.
\newblock [IEEE 2004 23rd IEEE Convention of Electrical and Electronics
  Engineers in Israel - Tel-Aviv, Israel (6-7 Sept. 2004)] 2004 23rd IEEE
  Convention of Electrical and Electronics Engineers in Israel - Blind source
  separation via the second characteristic function with asymptotically optimal
  weighting.

\bibitem[\protect\citename{Fukunaga, }1990]{fukunaga1990introduction}
Fukunaga, Keinosuke. 1990.
\newblock {\em Introduction to statistical pattern recognition}.
\newblock Academic press.

\bibitem[\protect\citename{Haghighi {\em et~al.}, }2008]{haghighi2008ica}
Haghighi, Arash~Moaddel, Haghighi, Iman~Moaddel, {\em et~al.} 2008.
\newblock An ICA Approach To Purify Components of Spatial Components of Seismic
  Recordings.
\newblock {\em In:} {\em SPE Annual Technical Conference and Exhibition}.
\newblock Society of Petroleum Engineers.

\bibitem[\protect\citename{Hastie {\em et~al.}, }2009]{hastie2009elements}
Hastie, Trevor, Tibshirani, Robert, \& Friedman, Jerome. 2009.
\newblock {\em The elements of statistical learning 2nd edition}.

\bibitem[\protect\citename{Horn \& Johnson, }1985]{horn1985matrix}
Horn, RG, \& Johnson, CR. 1985.
\newblock Matrix Analysis.

\bibitem[\protect\citename{Hyvarinen, }1999]{Hyvarinen-1999}
Hyvarinen, A. 1999.
\newblock Fast and robust fixed-point algorithms for independent component
  analysis.
\newblock {\em IEEE Transactions on Neural Networks}, {\bf 10}(5).

\bibitem[\protect\citename{Hyvarinen {\em et~al.}, }2004]{Hyvarinen2001}
Hyvarinen, Aapo, Karhunen, Juha, \& Oja, Erkki. 2004.
\newblock {\em Independent component analysis}.
\newblock  Vol. 46.
\newblock John Wiley and Sons.

\bibitem[\protect\citename{Jenssen \& Eltoft, }2003]{jenssen2003independent}
Jenssen, Robert, \& Eltoft, Torbj{\o}rn. 2003.
\newblock Independent component analysis for texture segmentation.
\newblock {\em Pattern Recognition}, {\bf 36}(10), 2301--2315.

\bibitem[\protect\citename{Karvanen \& Koivunen, }2002]{karvanen2002blind}
Karvanen, Juha, \& Koivunen, Visa. 2002.
\newblock Blind separation methods based on Pearson system and its extensions.
\newblock {\em Signal Processing}, {\bf 82}(4), 663--673.

\bibitem[\protect\citename{Kiviluoto \& Oja, }1998]{kiviluoto1998independent}
Kiviluoto, Kimmo, \& Oja, Erkki. 1998.
\newblock Independent Component Analysis for Parallel Financial Time Series.
\newblock {\em Pages  895--898 of:} {\em ICONIP},  vol. 2.

\bibitem[\protect\citename{Koivunen, }2002]{Juh-2002}
Koivunen, Juha Karvanen;~Visa. 2002.
\newblock Blind separation methods based on Pearson system and its extensions.
\newblock {\em Signal Processing}, {\bf 82}.

\bibitem[\protect\citename{Lacoume \& Ruiz, }1992]{lacoume1992separation}
Lacoume, Jean-Louis, \& Ruiz, P. 1992.
\newblock Separation of independent sources from correlated inputs.
\newblock {\em Signal Processing, IEEE Transactions on}, {\bf 40}(12),
  3074--3078.

\bibitem[\protect\citename{Lai {\em et~al.}, }2014]{lai2014multilinear}
Lai, Zhihui, Xu, Yong, Chen, Qingcai, Yang, Jian, \& Zhang, David. 2014.
\newblock Multilinear sparse principal component analysis.
\newblock {\em IEEE transactions on neural networks and learning systems}, {\bf
  25}(10), 1942--1950.

\bibitem[\protect\citename{Lorenzo-Seva \& Berge, }2006]{Tucker}
Lorenzo-Seva, Urbano, \& Berge, Jos. 2006.
\newblock Tucker's Congruence Coefficient as a Meaningful Index of Factor
  Similarity.
\newblock {\bf 2}(01), 57--64.

\bibitem[\protect\citename{Pham, }2001]{Pham-2001}
Pham, Dinh~Tuan. 2001.
\newblock Joint Approximate Diagonalization of Positive Definite Hermitian
  Matrices.
\newblock {\em SIAM Journal on Matrix Analysis and Applications}, {\bf 22}(01).

\bibitem[\protect\citename{Rodriguez {\em et~al.}, }2012]{rodriguez2012noising}
Rodriguez, Pedro~A, Calhoun, Vince~D, \& Adal{\i}, T{\"u}lay. 2012.
\newblock De-noising, phase ambiguity correction and visualization techniques
  for complex-valued ICA of group fMRI data.
\newblock {\em Pattern recognition}, {\bf 45}(6), 2050--2063.

\bibitem[\protect\citename{Spurek {\em et~al.}, }2018]{spurek2018fast}
Spurek, P, Tabor, J, Struski, L, \& Smieja, M. 2018.
\newblock Fast independent component analysis algorithm with a simple
  closed-form solution.
\newblock {\em Knowledge-Based Systems}.

\bibitem[\protect\citename{Spurek {\em et~al.}, }2017]{spurek2017ica}
Spurek, Przemyslaw, Tabor, Jacek, Rola, Przemyslaw, \& Ociepka, Michal. 2017.
\newblock ICA based on asymmetry.
\newblock {\em Pattern Recognition}, {\bf 67}, 230--244.

\bibitem[\protect\citename{Szekely {\em et~al.}, }2007]{szekely2007measuring}
Szekely, Gabor~J, Rizzo, Maria~L, Bakirov, Nail~K, {\em et~al.} 2007.
\newblock Measuring and testing dependence by correlation of distances.
\newblock {\em The annals of statistics}, {\bf 35}(6), 2769--2794.

\bibitem[\protect\citename{Tichavsky, }2009]{Tichavsky-2009}
Tichavsky, P.;~Yeredor, A. 2009.
\newblock Fast Approximate Joint Diagonalization Incorporating Weight Matrices.
\newblock {\em IEEE Transactions on Signal Processing}, {\bf 57}.

\bibitem[\protect\citename{Ungureanu {\em et~al.}, }2004]{EEG}
Ungureanu, M, Bigan, C, Strungaru, R, \& Lazarescu, V. 2004.
\newblock Independent component analysis applied in biomedical signal
  processing.
\newblock {\em Measurement Science Review}, {\bf 4}(2), 18.

\bibitem[\protect\citename{Xu {\em et~al.}, }2016]{xu2016local}
Xu, Chang, Liu, Tongliang, Tao, Dacheng, \& Xu, Chao. 2016.
\newblock Local rademacher complexity for multi-label learning.
\newblock {\em IEEE Transactions on Image Processing}, {\bf 25}(3), 1495--1507.

\bibitem[\protect\citename{Yang {\em et~al.}, }2005a]{yang2005ica}
Yang, Jian, Zhang, David, \& Yang, Jing-yu. 2005a.
\newblock Is ICA significantly better than PCA for face recognition?
\newblock {\em Pages  198--203 of:} {\em Tenth IEEE International Conference on
  Computer Vision (ICCV'05) Volume 1},  vol. 1.
\newblock IEEE.

\bibitem[\protect\citename{Yang {\em et~al.}, }2005b]{yang2005kernel}
Yang, Jian, Gao, Xiumei, Zhang, David, \& Yang, Jing-yu. 2005b.
\newblock Kernel ICA: An alternative formulation and its application to face
  recognition.
\newblock {\em Pattern Recognition}, {\bf 38}(10), 1784--1787.

\end{thebibliography}
\bibliographystyle{authordate1}
\end{document}